\documentclass[10pt]{article}

\newif\ifpreprintversion
\preprintversiontrue
\ifpreprintversion
  \usepackage[preprint]{tmlr}
\else
  \usepackage{tmlr}
\fi

\usepackage{amsmath,amssymb,amsfonts,amsthm,mathtools,bm}
\usepackage{graphicx}
\usepackage{microtype}
\usepackage{hyperref}
\usepackage{url}
\hypersetup{hidelinks}
\usepackage{enumitem}

\setlist[itemize]{leftmargin=*,topsep=2pt,itemsep=1pt}
\setlist[enumerate]{leftmargin=*,topsep=2pt,itemsep=1pt}
\allowdisplaybreaks[3]
\newtheorem{theorem}{Theorem}[section]
\newtheorem{proposition}[theorem]{Proposition}

\newtheorem{remark}[theorem]{Remark}
\newtheorem{example}[theorem]{Example}
\numberwithin{equation}{section}

\newcommand{\KL}{\operatorname{KL}}
\newcommand{\Lce}{\mathcal{L}_{\mathrm{CE}}}
\newcommand{\Ltot}{\mathcal{L}_{\mathrm{tot}}}
\newcommand{\Wcyc}{W_{\mathrm{cyc}}}
\newcommand{\Wetf}{W_{\mathrm{ETF}}}
\newcommand{\lambdacrit}{\lambda_{\mathrm{crit}}}
\newcommand{\norm}[1]{\left\lVert #1 \right\rVert}
\newcommand{\absval}[1]{\left\lvert #1 \right\rvert}
\newcommand{\ip}[2]{\left\langle #1, #2\right\rangle}
\newcommand{\classvec}[1]{\boldsymbol\delta_{#1}}

\def\month{04}
\def\year{2026}
\def\openreview{\url{https://openreview.net/forum?id=XXXX}}

\title{Beyond Neural Collapse: Task-Intrinsic Geometry Governs Neural Representations in Modular Arithmetic}

\author{
  \name Hu Tan %\thanks{First author.} 
  \email tanhu2020@amss.ac.cn \\
  \addr Academy of Mathematics and Systems Science, Chinese Academy of Sciences, Beijing 100190, China; and School of Mathematical Sciences, University of Chinese Academy of Sciences, Beijing 100049, China
  \AND
  \name Kuo Gai$^*$ \email gaikuo@simis.cn \\
  \addr Shanghai Institute for Mathematics and Interdisciplinary Sciences (SIMIS), Shanghai, China
  \AND
  \name Shihua Zhang\thanks{Correspondence to: gaikuo@simis.cn; zsh@amss.ac.cn} \email zsh@amss.ac.cn \\
  \addr Academy of Mathematics and Systems Science, Chinese Academy of Sciences, Beijing 100190, China; School of Mathematical Sciences, University of Chinese Academy of Sciences, Beijing 100049, China; and Key Laboratory of Systems Health Science of Zhejiang Province, School of Life Science, Hangzhou Institute for Advanced Study, University of Chinese Academy of Sciences, Chinese Academy of Sciences, Hangzhou 310024, China
}

\begin{document}

\maketitle

\begin{abstract}
While neural collapse (NC) predicts that a $K$-class-balanced classifier should organize terminal representations as a $(K-1)$-dimensional simplex equiangular tight frame (ETF), modular addition consistently enters a different regime: networks compress to a two-dimensional cyclic geometry in which both classifier weights and token embeddings lie on circles. We refine the explanation of this phenomenon in three directions. First, we formalize a layerwise non-uniform training mechanism: downstream classifier weights are driven by dense cross-entropy gradients into a rank-2 equiangular configuration before upstream embeddings fully reorganize, and once this classifier plane forms, backpropagated feature gradients constrain embedding motion to the same plane while weight decay suppresses orthogonal components. Second, after this subspace locking, the induced in-plane dynamics admit an entropy-regularized transport interpretation on $S^1$; combined with modular-addition labels, this reduces embedding formation to phase alignment, whose minimizers are single-frequency characters of $\mathbb{Z}/P\mathbb{Z}$ and hence equal-angle points on a circle. Third, we quantify why this solution prevails over NC: a simplex ETF gains only an $O(1)$ advantage in cross-entropy, whereas the cyclic rank-2 solution enjoys a $\Theta(K)$ advantage under Schatten or weight-decay surrogates, yielding a critical threshold $\lambda_{\mathrm{crit}} = \Theta(1/K)$. Our results explain both why classifier weights move first and why embeddings subsequently align with them, showing that grokking on modular arithmetic is governed not by maximal separation alone but by a task-structured trade-off between separation, symmetry, and complexity.
\end{abstract}

\section{Introduction}

Neural networks often appear to discover internal geometric codes that are far more structured than the supervision they receive. A major advance in making this precise is the neural collapse (NC) paradigm, which predicts that during terminal-phase training, a $K$-class-balanced classifier should organize its class means and last-layer weights into a $(K-1)$-dimensional simplex equiangular tight frame (ETF), thereby maximizing mutual separation \citep{papyan2020prevalence,mixon2020neural,zhou2022optimization}. Subsequent work has refined this picture across depth, showing intermediate forms of collapse during feature learning and progressive layerwise compression rather than a purely last-layer phenomenon \citep{rangamani2023inc, wang2024pfc, wang2025layerwise}. Taken at face value, this line of theory suggests that more classes should, in general, induce more representational dimensions.

However, when we train neural networks on modular addition, computing $(a+b) \bmod P$ for inputs $a,b \in \{0,\ldots, P-1\}$ with $P$ prime (Fig.~\ref{fig:main}A), we observe a very different tendency. Instead of expanding toward the $(P-1)$ dimensions suggested by NC, the learned solution compresses into an essentially two-dimensional subspace. Both the learned embeddings and the classifier weights arrange themselves as points on circles, faithfully mirroring the cyclic structure of $\mathbb{Z}/P\mathbb{Z}$ (Fig.~\ref{fig:main}B, C). For $P=97$, this is a 48-fold reduction in dimensionality, from the $96$-dimensional simplex benchmark to a $2$-dimensional cyclic code, achieved without any architectural prior that explicitly encodes group structure.

\begin{figure}[t]
\centering
\includegraphics[width=0.9\linewidth]{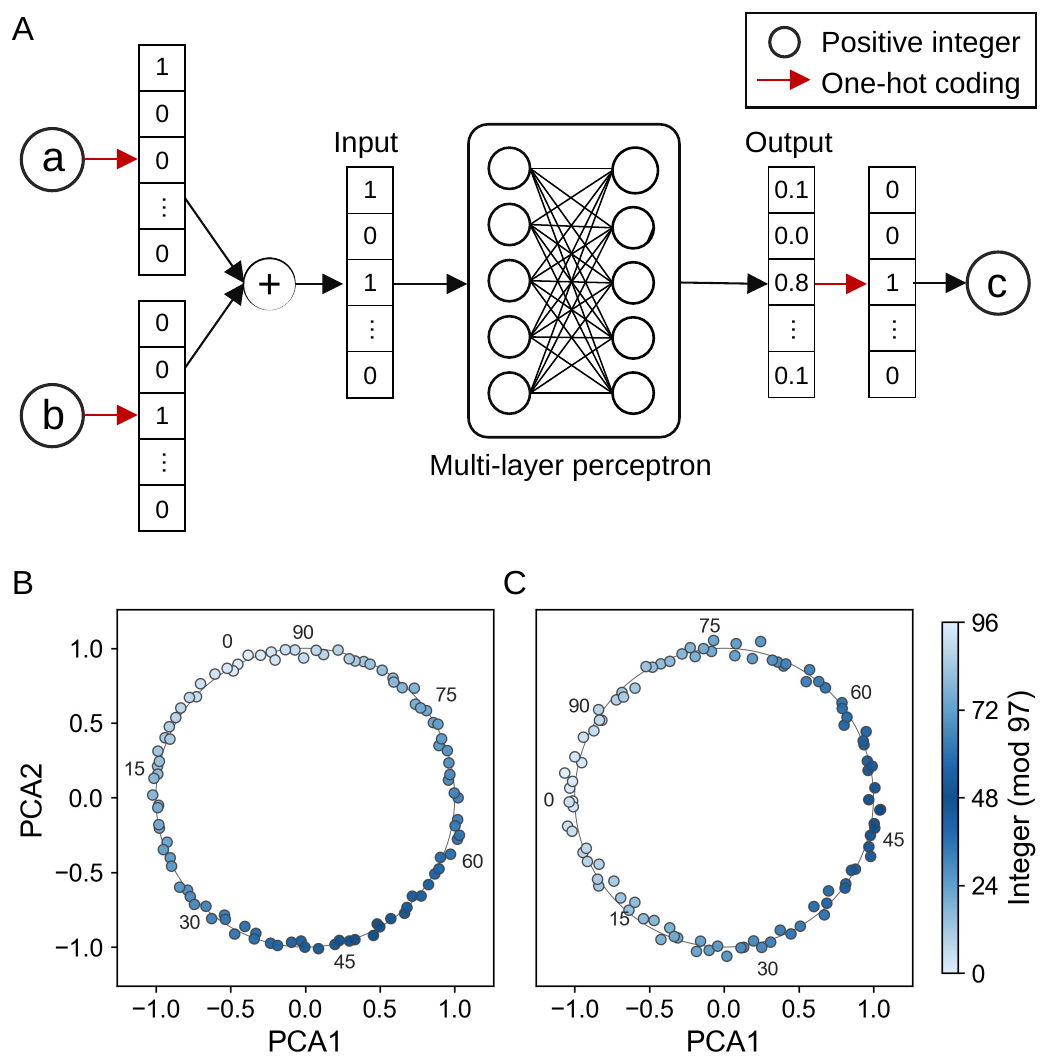}
\caption{\textbf{Task setup and cyclic representations for modular addition modulo $97$.} \textbf{(A)} Each input pair $(a,b)$ is one-hot encoded, concatenated, and processed by an MLP; the softmax output predicts $c=(a+b)\bmod 97$. \textbf{(B)} PCA of the learned token embeddings after automorphism-based label reordering. \textbf{(C)} PCA of the output weights. The matched circular geometry in panels (B) and (C) shows that the network learns the task-intrinsic cyclic code rather than a generic high-dimensional separator.}
\label{fig:main}
\end{figure}

This geometry is especially striking because modular addition also exhibits grokking: generalization remains poor for a long period and then rises abruptly once the structured solution is found \citep{power2022grokking,nanda2023progress,liu2022towards,gromov2023grokking}. Mechanistic and effective-theory studies already suggest that this transition is not homogeneous across layers. In particular, decoder/classifier learning and representation learning can proceed on different timescales \citep{liu2022towards}; a recent two-clock formulation separates the fast decay of classification loss from slower representation simplification in grokking \citep{tan2026deciphering}. In modular addition, weight decay can simplify the classifier and thereby induce the alignment forces that subsequently organize embeddings into circles or grids \citep{musat2024training}. Recent embedding-centric analyses attribute delayed embedding reorganization to sparse, bilinear, token-dependent updates \citep{alquabeh2025embedding}. These observations strongly suggest a missing theoretical ingredient up to now: the classifier geometry and the embedding geometry should not be explained as if they emerged simultaneously.

In this paper, we show that modular addition belongs to a distinct \emph{task-structured low-rank regime}. The correct theoretical benchmark is not \emph{maximal separation at any dimension}, but rather \emph{best separation compatible with the task symmetry under the regularization actually induced during training}. Our main claim is that the network first organizes the last-layer weights into a rank-2 equiangular code, then pulls upstream embeddings into the same plane through backpropagated gradients, and finally phase-locks those embeddings into a cyclic character orbit. This perspective also clarifies how transport and geodesic language should enter the story: we do \emph{not} need to claim that the full parameter trajectory is a Wasserstein geodesic; rather, after projection to the terminal classifier plane, the induced angular dynamics are naturally described by an entropy-regularized transport problem on the circle.

We make four contributions. \textbf{(1) Layerwise heterogeneity and classifier-first geometry.} We formalize a \emph{classifier first, embeddings follow} mechanism: once the classifier collapses to a rank-2 plane, every task gradient that reaches the embeddings lies in that same plane, while orthogonal embedding components are damped by weight decay or implicit regularization. \textbf{(2) Geometric optimality of the circular codebook.} Within the two-dimensional terminal regime, we prove that the cross-entropy objective is uniquely minimized by an equiangular configuration of classifier directions. \textbf{(3) From alignment to characters.} Combining the additive feature model, an angular optimal-transport reformulation, and Fourier concentration, we show that embeddings aligned with the classifier must arrange as equal-angle points on a circle, i.e., as characters of the cyclic group. \textbf{(4) Why this is not NC.} We derive an explicit trade-off showing that the simplex ETF gains only an $O(1)$ advantage in cross-entropy, whereas the cyclic rank-2 solution gains $\Theta(K)$ in regularization cost, yielding a sharp threshold $\lambda_{\mathrm{crit}} = \Theta(1/K)$ above which the cyclic solution is globally preferred.

Together, these results refine the status of NC. Modular addition is not a counterexample in which optimization somehow fails to reach the \emph{right} high-dimensional configuration; it is a structured regime in which the optimal representation is genuinely low-rank because the task itself has a low-dimensional algebraic generator. The sudden emergence of the circle is therefore best understood as a hierarchical grokking event: first the classifier discovers the correct low-rank code, then the embeddings align to it, and only then does perfect generalization become possible.

\section{Related Work}\label{sec:related-work}

NC has become a central framework for analyzing late-stage representation geometry in deep learning classification tasks \citep{papyan2020prevalence}. The phenomenon was first formalized through last-layer collapse, weight--feature alignment, and simplex ETF structure, and was then placed on firmer optimization-theoretic ground through unconstrained-feature and gradient-flow analyses \citep{mixon2020neural,zhu2021geometric,zhou2022optimization,tirer2023extended}. More recent work extends this picture beyond the last layer: intermediate collapse has been observed during feature learning, later layers have been shown to progressively compress within-class variability while enhancing discrimination \citep{rangamani2023inc,wang2025layerwise,beaglehole2024agop}, and a progressive feedforward-collapse viewpoint has been proposed for residual networks by \citet{wang2024pfc}. These developments matter for our paper because they suggest that representational geometry is intrinsically layerwise rather than purely terminal. They also sharpen the comparison class in Section~\ref{sec:why-not-nc}: once one departs from unconstrained-feature models and allows depth, factorization, or explicit regularization to bias the spectrum, low-rank alternatives to the simplex ETF become theoretically plausible rather than pathological \citep{jacot2023bottleneck,wang2024oneway,zangrando2024nrc,sukenik2024nc}.

At the same time, several works show that NC is not literally universal. Deviations have been reported in deeper or otherwise regularized models. A growing theory literature argues that multi-layer regularization can favor lower-rank solutions than the simplex ETF predicted by shallow or unconstrained models \citep{yang2023neural,tirer2023extended,jacot2023bottleneck,wang2024oneway,zangrando2024nrc,sukenik2024nc}. Our contribution is complementary to this literature: instead of studying a generic low-rank alternative, we identify the \emph{specific} low-rank geometry selected by a structured task, namely the cyclic rank-2 code for modular addition.

On the grokking side, modular arithmetic has become a canonical setting for studying delayed generalization \citep{power2022grokking}. Effective-theory analyses explain grokking through competition between decoder fitting and representation learning \citep{liu2022towards}; mechanistic studies show that hidden progress toward the eventual algorithm accumulates long before test accuracy jumps \citep{nanda2023progress,gromov2023grokking,zhong2023clockpizza}; and recent task-specific work on modular addition gives increasingly explicit descriptions of the learned harmonic algorithm, the role of weight decay, and the lagging dynamics of the embedding layer \citep{mohamadi2024whydoyougrokk,musat2024training,alquabeh2025embedding}. Closely related to this layerwise view, \citet{tan2026deciphering} formalize grokking through two training clocks, separating fast loss fitting from slower representation simplification using deep-linear theory and conditional ReLU reductions. We build directly on this line of work and focus on a gap that remains under-explained until now: why classifier weights appear to organize first, why embeddings subsequently align with them, and why the aligned solution is an equal-angle circle rather than a generic low-rank cloud.

Finally, our analysis uses optimal transport as a geometric reformulation of the angular cross-entropy objective. OT has become an influential language for structured representation learning, from Sinkhorn regularization to Wasserstein-based objectives and domain adaptation \citep{cuturi2013sinkhorn,peyre2019computational,arjovsky2017wasserstein,courty2017optimal,gai2024otad}. In our setting, the relevant object is not a transport map in ambient Euclidean parameter space, but an induced transport problem on the circle once the terminal classifier plane has formed. This viewpoint connects modular-addition grokking to classical energy-minimization results on $S^1$, where regular polygons play a distinguished role under completely monotone potentials \citep{cohn2007universally,cohn2010balanced}.

\section{Background and Preliminaries}\label{sec:preliminaries}

\subsection{Neural Collapse, Intermediate Collapse, and Equiangular Tight Frames}
NC characterizes the geometry of learned representations in the terminal phase of training. It manifests as four classical behaviors: (NC1) within-class variability vanishes as features collapse to their class means; (NC2) these class means form a simplex ETF in $\mathbb{R}^{K-1}$; (NC3) classifier weights align with the class means; and (NC4) the decision rule converges to nearest-class-mean classification \citep{papyan2020prevalence}. In its cleanest form, this theory says that $K$ classes naturally induce a $(K-1)$-dimensional geometry when the dominant goal is balanced classification or separation.

Mathematically, a standard simplex ETF consists of $K$ vectors $\{\boldsymbol{v}_k\}_{k=1}^K$ in $\mathbb{R}^{K-1}$ such that for all $i,j$,
\begin{equation*}
\boldsymbol{v}_i^\top \boldsymbol{v}_j =
\begin{cases}
1, & i=j,\\
-\frac{1}{K-1}, & i\neq j.
\end{cases}
\end{equation*}
This configuration maximizes mutual separation under a symmetric correlation criterion. However, recent depth-wise studies indicate that the representational path toward this configuration can be non-uniform across layers, and recent regularized deep-feature models show that high-rank NC geometries need not remain globally optimal once low-rank bias is taken seriously \citep{rangamani2023inc,wang2025layerwise,zangrando2024nrc,sukenik2024nc}.

\subsection{Modular Arithmetic and Cyclic Homomorphisms}
We focus on the modular addition task defined on the cyclic group $G = \mathbb{Z}/P\mathbb{Z} = \{0,1,\ldots,P-1\}$ for a prime $P$. The algebraic structure of $G$ is captured by its characters into the unit circle, namely maps $\chi_m: G \to S^1 \subset \mathbb{R}^2$ of the form
\begin{equation*}
\chi_m(k) = \left[ \cos\left(\frac{2\pi m k}{P}\right), \sin\left(\frac{2\pi m k}{P}\right) \right]^\top, \qquad m \in \{1,\ldots,P-1\}.
\end{equation*}
Geometrically, this places the group elements at equal angular intervals on a circle. When $\gcd(m,P)=1$, the map is injective and preserves cyclic order up to permutation. A central question of this paper is why a neural network trained only on class labels should rediscover precisely this representation.

\subsection{Grokking, Cross-Entropy, and Layerwise Implicit Bias}
The modular addition task is known to exhibit grokking \citep{power2022grokking}: training accuracy reaches near perfection long before test accuracy does, and generalization emerges only after a long delay. Effective-theory analyses attribute this to a competition between fast decoder/classifier fitting and slower representation learning, with structured representations appearing only in the narrow regime where memorization and compression are properly balanced \citep{liu2022towards}. A complementary two-clock perspective distinguishes fast fitting from slower representation simplification and gives a deep-linear theoretical account of this separation \citep{tan2026deciphering}. In modular addition, recent training-dynamics studies further argue that weight decay prevents the classifier from overfitting pairwise sums, thereby creating the alignment force that eventually organizes embeddings into circles or grids \citep{musat2024training}. Embedding-specific analyses sharpen this picture by showing that token embeddings receive sparse, frequency-dependent, and bilinearly coupled updates, which can cause them to lag behind downstream linear layers \citep{alquabeh2025embedding}.

We analyze this setting under the standard cross-entropy loss with temperature $\tau$:
\begin{equation*}
\mathcal{L}_{\mathrm{CE}}(\boldsymbol{z}, y, \tau)
= -\log \frac{\exp(z_y/\tau)}{\sum_{k=1}^K \exp(z_k/\tau)},
\end{equation*}
where $z_k = \boldsymbol{w}_k^\top \boldsymbol{h}$ is the logit for class $k$. Assuming terminal-phase NC-style alignment, where features and weights are aligned up to a scalar factor $s$ (so that $\boldsymbol{h} \propto \boldsymbol{w}_y$ implies $z_k \approx s\,\boldsymbol{w}_k^\top \boldsymbol{w}_y$), the effective inverse temperature becomes $\tilde{\tau}^{-1}=s/\tau$.

To understand dimensionality selection, we compare the cross-entropy term against the explicit or implicit low-rank bias induced during training. Following the literature on matrix factorization and deep linear networks, we use Schatten-$q$ norms $\|\boldsymbol{W}\|_{S_q}=(\sum_i \sigma_i^q)^{1/q}$ with $q<2$ as a convenient surrogate: such penalties favor solutions with fewer non-zero singular values and hence lower effective rank \citep{arora2019implicit,gunasekar2017implicit,jacot2023bottleneck,wang2024oneway}.

\section{Problem Setup}\label{sec:problem-setup}
We study neural networks learning modular addition as a $P$-class classification problem (Fig.~\ref{fig:main}A):
\begin{equation*}
f: \{0,1,\ldots,P-1\}^2 \to \{0,1,\ldots,P-1\}, \qquad f(a,b)=(a+b) \bmod P.
\end{equation*}
Inputs are concatenated one-hot vectors in $\mathbb{R}^{2P}$, processed through a two-layer ReLU network with token embeddings $\{\mathbf e_i\}_{i=0}^{P-1}$ and last-layer weights $\{\mathbf w_k\}_{k=0}^{P-1}$.

A robust empirical regularity across modular-addition studies is that the learned solution spontaneously organizes into a two-dimensional circular structure \citep{liu2022towards,gromov2023grokking,nanda2023progress,zhong2023clockpizza,musat2024training,mohamadi2024whydoyougrokk,alquabeh2025embedding}:
\begin{equation*}
\mathbf e_i \approx r_e \begin{pmatrix} \cos(2\pi i/P) \\ \sin(2\pi i/P) \end{pmatrix},
\qquad
\mathbf w_k \approx r_w \begin{pmatrix} \cos(2\pi k/P) \\ \sin(2\pi k/P) \end{pmatrix}.
\end{equation*}
This sharply contradicts the NC prediction of $(P-1)$ dimensions while preserving the cyclic group structure. Equally important for our purposes, the temporal ordering in these studies is not perfectly synchronous: the last-layer geometry typically becomes visibly organized before the full embedding geometry stabilizes, which motivates the layerwise analysis developed below.

\section{A Layered Theory of the Circular Solution}
\label{sec:emergent-patterns-modular-addition}

The empirical picture is not that all layers collapse at once. Rather, modular-addition training exhibits a sequence of geometric selections: the classifier first settles on a low-dimensional code, then that code constrains the admissible embedding updates, and only inside the resulting plane does the task symmetry pick the final circular arrangement. Presenting the theory in this order removes much of the apparent mystery around the embeddings.

We therefore separate three questions. First, what classifier geometry is preferred once the terminal decision space is effectively two-dimensional? Second, why do the last-layer weights appear to move earlier than the embeddings? Third, after the embedding dynamics have been locked into the classifier plane, why does alignment with the weights produce an equal-angle orbit rather than an arbitrary low-rank cloud?

\subsection{Classifier Geometry in the Terminal Plane}
\label{sec:optimality}

Empirically, the trained last-layer weight matrix has effective rank $2$. We take this terminal plane regime as the starting point of the analysis. Once the logits are constrained to depend on two-dimensional directions, the optimization problem becomes purely angular, and the regular polygon emerges as the unique optimum.

\begin{theorem}[Loss--angular reduction]
\label{thm:equiangular-optimality}
Assume balanced classes and terminal-phase feature--classifier alignment with a common logit scale $s>0$: $\mathbf h_{k,i}=s\,\mathbf w_k$, where each $\mathbf w_k\in\mathbb R^2$ is a unit vector. Let $\tilde\tau=\tau/s$ be the effective temperature. Then minimizing the sample-averaged cross-entropy is equivalent, up to an additive constant, to minimizing
\begin{equation}
\label{eq:angular-potential}
f(\theta_1,\ldots,\theta_K)
=\sum_{k=1}^{K}\log\!\Bigl(1+\sum_{j\neq k}\exp\!\Bigl(\frac{\cos(\theta_k-\theta_j)-1}{\tilde\tau}\Bigr)\Bigr),
\end{equation}
where $\theta_k$ is the polar angle of $\mathbf w_k$.
\end{theorem}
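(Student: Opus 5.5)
The plan is a direct computation: substitute the alignment assumption into the cross-entropy, express the logits through angles, and then factor out the correct-class term. Under the hypothesis $\mathbf h_{k,i}=s\,\mathbf w_k$, every sample $i$ of class $k$ has the same feature. Its logits are therefore $z_j=\mathbf w_j^\top \mathbf h_{k,i}=s\,\mathbf w_j^\top\mathbf w_k$. Because each $\mathbf w_j$ is a unit vector in $\mathbb R^2$ with polar angle $\theta_j$, we have $\mathbf w_j^\top\mathbf w_k=\cos(\theta_j-\theta_k)$. Dividing by $\tau$ gives $z_j/\tau=\cos(\theta_k-\theta_j)/\tilde\tau$ with $\tilde\tau=\tau/s$.

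Next I write the per-sample loss for a class-$k$ sample and divide numerator and denominator by the correct-class exponential $\exp(1/\tilde\tau)$ (note $\cos 0=1$):
\begin{equation*}
\Lce=-\log\frac{\exp(1/\tilde\tau)}{\sum_{j}\exp(\cos(\theta_k-\theta_j)/\tilde\tau)}
=\log\Bigl(1+\sum_{j\neq k}\exp\Bigl(\frac{\cos(\theta_k-\theta_j)-1}{\tilde\tau}\Bigr)\Bigr).
\end{equation*}
This value depends only on $k$, not on the sample index $i$. Since classes are balanced, each class contributes the same number $n$ of samples. The sample average is then $\frac{1}{nK}\sum_k n\,(\cdot)=\frac1K f(\theta_1,\ldots,\theta_K)$. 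This is a positive multiple of $f$, so it has the same minimizers.

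The only step that needs care is interpreting ``equivalent up to an additive constant''. The reduction above actually yields an exact identity up to the positive factor $1/K$, with additive constant zero. If the paper instead intends the undivided form $\sum_k[-1/\tilde\tau+\log\sum_j e^{\cos(\theta_k-\theta_j)/\tilde\tau}]$, the additive constant $-K/\tilde\tau$ appears when passing to $f$. I will state both forms. I will also note that the map from unit vectors to polar angles is a bijection modulo $2\pi$, so minimizing over the $\mathbf w_k$ is the same as minimizing over the $\theta_k$. The scale $s$ and temperature $\tau$ are fixed, so $\tilde\tau$ is a constant of the problem.

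I expect no real obstacle beyond this bookkeeping. The potential subtlety is whether $s$ should be treated as a free variable. The theorem fixes $s$ as a common scale, so I treat it as fixed and note that the equivalence holds for each fixed $s$.
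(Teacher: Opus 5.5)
Your proposal is correct and follows essentially the same route as the paper's proof: substitute $\mathbf h_{k,i}=s\,\mathbf w_k$, write the logits as $s\cos(\theta_k-\theta_j)$, divide numerator and denominator by the correct-class exponential $e^{s/\tau}$, and note that balanced averaging only introduces a positive constant factor. Your observation that the relation is really a positive multiplicative factor $1/K$ rather than an additive constant is accurate; the paper likewise says averaging introduces only a constant multiplicity factor.
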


\noindent\emph{Proof idea.} Under the alignment assumption, logits depend only on pairwise inner products $\mathbf w_k^\top\mathbf w_j=\cos(\theta_k-\theta_j)$; the scale $s$ is absorbed into the effective temperature $\tilde\tau$. The reduction is written out in Appendix~\ref{app:proof:loss-angular}.

\begin{theorem}[Equiangular optimality in the classifier plane]
\label{thm:equiangular-global-minimum}
The potential $f$ in \eqref{eq:angular-potential} achieves its unique global minimum, up to a common rotation, at the equiangular configuration
\begin{equation*}
\theta_k^*=\theta_0+\frac{2\pi k}{K},
\qquad k=0,1,\ldots,K-1,
\end{equation*}
for an arbitrary $\theta_0\in\mathbb R$.
\end{theorem}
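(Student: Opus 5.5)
The plan is to combine a convexity/symmetrization step with a majorization step on the circle. First I rewrite the objective so that the self term is included. Since $\exp((\cos 0-1)/\tilde\tau)=1$,
\begin{equation*}
f(\theta)=\sum_{k=1}^K \Phi\bigl(\boldsymbol c_k\bigr)-K/\tilde\tau,
\qquad
\Phi(\boldsymbol c)=\log\sum_{r=0}^{K-1} e^{c_r/\tilde\tau}.
\end{equation*}
Here $\boldsymbol c_k=(c_{k,0},\ldots,c_{k,K-1})$ and $c_{k,r}=\cos(d_{k,r})$. The quantity $d_{k,r}$ is the counterclockwise angular distance from $\theta_k$ to its $r$-th neighbour in cyclic order, and $d_{k,0}=0$. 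Writing the gaps between consecutive points (sorted around the circle) as $g_1,\ldots,g_K\ge 0$ with $\sum_i g_i=2\pi$, each $d_{k,r}$ is a sum of $r$ consecutive gaps. Hence
\begin{equation*}
\frac1K\sum_k d_{k,r}=\frac{2\pi r}{K},
\end{equation*}
which is exactly the equiangular value. The function $\Phi$ is a log-sum-exp: it is convex, permutation-symmetric and coordinatewise increasing in $\boldsymbol c$.

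\textbf{Step 1 (symmetrization over base points).} By convexity of $\Phi$ and Jensen over $k$,
\begin{equation*}
\sum_k\Phi(\boldsymbol c_k)\ge K\,\Phi(\bar{\boldsymbol c}),
\qquad
\bar c_r=\frac1K\sum_k\cos(d_{k,r}).
\end{equation*}
Equality holds iff all $\boldsymbol c_k$ coincide, which is the case for the equiangular configuration: there every point sees the same neighbour distances $2\pi r/K$. The problem therefore reduces to showing
\begin{equation*}
\Phi(\bar{\boldsymbol c})\ge\Phi(\boldsymbol c^*),
\qquad
c^*_r=\cos(2\pi r/K),
\end{equation*}
together with a characterization of when equality holds.

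\textbf{Step 2 (comparison of the averaged cosine profile).} $\Phi$ is symmetric and convex, hence Schur-convex, and it is also increasing. It therefore suffices to show that $\bar{\boldsymbol c}$ weakly submajorizes $\boldsymbol c^*$ from above, i.e., that the partial sums of the largest entries of $\bar{\boldsymbol c}$ dominate those of $\boldsymbol c^*$. Two facts are available. The total sum satisfies
\begin{equation*}
\sum_r\bar c_r=\frac1K\Bigl|\sum_k e^{i\theta_k}\Bigr|^2\ge 0=\sum_r c^*_r,
\end{equation*}
with equality iff the first Fourier moment of the configuration vanishes. For partial sums I would pair $r$ with $K-r$ (note $c_{k,r}=c_{k,K-r}$) and use the higher Fourier moments. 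Writing
\begin{equation*}
\sum_{k,j}\cos\bigl(m(\theta_k-\theta_j)\bigr)=\Bigl|\sum_k e^{im\theta_k}\Bigr|^2\ge 0
\end{equation*}
for every $m$, the equiangular configuration is the unique one (up to rotation) that annihilates all moments $m=1,\ldots,K-1$. An alternative I would try in parallel is to expand $e^{\cos x/\tilde\tau}=I_0(1/\tilde\tau)+2\sum_{m\ge1}I_m(1/\tilde\tau)\cos(mx)$. The Bessel coefficients $I_m$ are positive, so $x\mapsto e^{(\cos x-1)/\tilde\tau}$ is a positive-definite kernel. This would allow bounding $\sum_k S_k$ from below, via Cohn--Kumar type results for completely monotone-like potentials on $S^1$, with the regular $K$-gon as the minimizer.

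\textbf{Step 3 (uniqueness).} Strict convexity of log-sum-exp in every direction not parallel to $(1,\ldots,1)$ gives a strict Jensen inequality in Step 1 unless every base point has the same cosine profile. Together with strictness in Step 2, this forces all gaps to be equal, i.e., $\theta_k^*=\theta_0+2\pi k/K$ after relabelling.

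\textbf{Main obstacle.} I expect Step 2 to be the main obstacle. Because $\cos$ is not concave on $[0,2\pi]$, the constraint $\frac1K\sum_k d_{k,r}=2\pi r/K$ alone does not give $\bar c_r$ versus $c_r^*$ termwise. Moreover, the outer $\log$ prevents directly invoking energy-minimization results for the pair sum $\sum_{k\neq j}$. My first attempt would be to establish the submajorization using the positive-definiteness identities above. If that fails, I would fall back on a local argument: show that the equiangular configuration is a strict local minimizer via the Hessian, whose eigenvalues the circulant structure diagonalizes through the discrete Fourier transform. I would then combine this with compactness and a gap-merging argument showing that every other critical point has strictly larger $f$.
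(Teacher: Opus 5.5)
\textbf{Step~2 is a genuine gap.} After the averaging in Step~1, the inequality $\Phi(\bar{\boldsymbol c})\ge\Phi(\boldsymbol c^*)$ you reduce to is false for small $\tilde\tau$. So no comparison of $\bar{\boldsymbol c}$ with $\boldsymbol c^*$, majorization or otherwise, can finish the proof.

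\emph{The submajorization runs the wrong way near the optimum.} Take $K\ge5$ and any configuration close to, but different from, the regular polygon. The two largest entries of $\bar{\boldsymbol c}$ are $\bar c_0=1$ and $\bar c_1=\bar c_{K-1}=\frac1K\sum_k\cos g_k$. Strict concavity of $\cos$ on $[0,\pi/2]$ gives $\bar c_1<\cos(2\pi/K)=c^*_1$, so the second partial sum already fails. (Incidentally, $c_{k,r}=c_{k,K-r}$ is false for an individual base point; only $\bar c_r=\bar c_{K-r}$ holds.) More generally, where $\cos$ is concave---the near neighbours, which carry the largest softmax weights---your constraint $\frac1K\sum_k d_{k,r}=2\pi r/K$ pushes $\bar c_r$ \emph{below} $c^*_r$.

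\emph{At low temperature even $\Phi(\bar{\boldsymbol c})\ge\Phi(\boldsymbol c^*)$ fails.} Take $g_i=2\pi/K+\delta\cos(2\pi i/K)$. Since $\sum_k(d_{k,r}-2\pi r/K)=0$, one gets $\bar c_r=c^*_r\bigl(1-\tfrac12 V_r\bigr)+O(\delta^3)$ with $V_r=\frac1K\sum_k(d_{k,r}-2\pi r/K)^2=\delta^2\sin^2(\pi r/K)/\bigl(2\sin^2(\pi/K)\bigr)$. Therefore
\begin{equation*}
\sum_{r=0}^{K-1}e^{\beta\bar c_r}-\sum_{r=0}^{K-1}e^{\beta c^*_r}
=-\frac{\beta\delta^2}{8\sin^2(\pi/K)}\sum_{r=0}^{K-1}e^{\beta\cos x_r}\cos x_r\,(1-\cos x_r)+O(\delta^3),
\qquad x_r=\frac{2\pi r}{K},\quad \beta=\frac{1}{\tilde\tau}.
\end{equation*}
For large $K$ the last sum equals $K\bigl[I_1(\beta)-\tfrac12\bigl(I_0(\beta)+I_2(\beta)\bigr)\bigr]$ up to negligible aliasing terms, and this is positive once $\beta\gtrsim1.6$. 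For $K=12$ and $\tilde\tau=1/2$ the sum is about $1.28$. Hence $\Phi(\bar{\boldsymbol c})<\Phi(\boldsymbol c^*)$ for every small $\delta\neq0$. This does not contradict the theorem, because the Jensen gap in Step~1 is also of order $\delta^2$. It does show that Step~1 discards exactly the per-point information the proof needs: the averaged profile looks \emph{better} separated than the regular $K$-gon.

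\textbf{The fallback does not close this either.} Diagonalizing the circulant Hessian by the DFT yields at most local minimality. The global part---ruling out every other critical point via compactness and \emph{gap merging}---is the whole content of the theorem and is left unspecified.

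\textbf{Comparison with the paper.} The paper takes the route you list as a parallel option. It sandwiches $f(\theta)-f(\theta^{\mathrm{reg}})$ between $\frac1K$ and $1$ times $S_{\mathrm{tot}}(\theta)-S_{\mathrm{tot}}(\theta^{\mathrm{reg}})$, using $\psi'\in[1/K,1]$ for $\psi(x)=\log(1+x)$ on $[0,K-1]$. It then invokes universal optimality of the regular $K$-gon (Cohn--Kumar) for the completely monotone potential $t\mapsto e^{-t/(2\tilde\tau)}$ of squared chordal distance. Your caution about the outer logarithm applies there too. The mean-value bound gives $\log(1+S_k(\theta))-\log(1+S_k(\theta^{\mathrm{reg}}))\ge\frac1K\bigl(S_k(\theta)-S_k(\theta^{\mathrm{reg}})\bigr)$ only for indices with $S_k(\theta)\ge S_k(\theta^{\mathrm{reg}})$, and it reverses for the others. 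So on either route, a complete argument must control the concavity loss of the logarithm across base points rather than average it away.
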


\noindent\emph{Proof idea.} The kernel $h(\theta)=\exp((\cos\theta-1)/\tilde\tau)$ is a completely monotone function of the squared chordal distance on $S^1$. Classical circle-energy results therefore identify the regular $K$-gon as the unique minimizer of the associated pairwise interaction, and a sandwich argument transfers that optimality to the log-sum-exp potential $f$; see Appendix~\ref{app:proof:equiangular-global} and \citet{cohn2007universally,cohn2010balanced}.

\begin{theorem}[Terminal-phase weight--feature alignment]
\label{thm:nc-alignment}
Consider the last-layer objective with weights $W=[\mathbf w_1,\dots,\mathbf w_K]\in\mathbb R^{2\times K}$, balanced class means $\{\boldsymbol{\mu}_k\}\subset\mathbb R^2$, effective temperature $\tilde\tau>0$, and weight decay $\lambda>0$:
\begin{equation}
\label{eq:last-layer-objective}
\min_W\; \frac{1}{K}\sum_{k=1}^K\Bigl[-\frac{1}{\tilde\tau}\mathbf w_k^\top\boldsymbol{\mu}_k
+\log\sum_{j=1}^{K}e^{\mathbf w_j^\top\boldsymbol{\mu}_k/\tilde\tau}\Bigr]
+\lambda\norm{W}_F^2.
\end{equation}
At any stationary point in the terminal phase, $\mathbf w_k\parallel \boldsymbol{\mu}_k$ for every $k$. If $\operatorname{span}\{\boldsymbol{\mu}_k\}$ is two-dimensional, then the induced angular objective has a unique minimizer (up to rotation) in which both $\{\mathbf w_k\}$ and $\{\boldsymbol{\mu}_k\}$ form regular $K$-gons with class-wise alignment.
\end{theorem}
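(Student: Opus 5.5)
We start from the first-order stationarity condition of \eqref{eq:last-layer-objective}. Let $p_{kj}=\exp(\mathbf w_j^\top\boldsymbol\mu_k/\tilde\tau)/\sum_{l}\exp(\mathbf w_l^\top\boldsymbol\mu_k/\tilde\tau)$ be the softmax probabilities for class $k$. Differentiating with respect to $\mathbf w_j$ gives
\begin{equation*}
2\lambda\,\mathbf w_j=\frac{1}{K\tilde\tau}\Bigl[(1-p_{jj})\boldsymbol\mu_j-\sum_{k\neq j}p_{kj}\boldsymbol\mu_k\Bigr].
\end{equation*}
So every stationary $\mathbf w_j$ is a specific linear combination of the class means. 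It is dominated by $\boldsymbol\mu_j$ and corrected by the off-diagonal probabilities.

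The alignment claim $\mathbf w_j\parallel\boldsymbol\mu_j$ will be read in the terminal-phase sense, in which the off-diagonal probabilities vanish or become symmetric. First, in the terminal regime the logit gap diverges (equivalently, $\tilde\tau\to 0$ relative to the margins). Then $p_{kj}\to 0$ for $k\neq j$, while $1-p_{jj}$ is the sum of those same small terms. The natural bound is $p_{kj}\le e^{-\Delta/\tilde\tau}$, where $\Delta$ is the minimum margin. Normalizing by $1-p_{jj}$ shows that the direction of $\mathbf w_j$ is $\boldsymbol\mu_j$ minus a weighted average of the other means with exponentially concentrated weights.

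To turn this into exact parallelism, I would argue at the symmetric (balanced) stationary point. There the class means are equinormed and the configuration is invariant under the cyclic relabeling. The correction $\sum_{k\neq j}p_{kj}\boldsymbol\mu_k$ is then symmetric about the axis of $\boldsymbol\mu_j$, so its component orthogonal to $\boldsymbol\mu_j$ cancels and $\mathbf w_j\parallel\boldsymbol\mu_j$ follows exactly. The step I expect to be the main obstacle is justifying alignment at an arbitrary stationary point without presupposing symmetry. I would first try a variational argument: fix $\norm{\mathbf w_j}$ and rotate $\mathbf w_j$ toward $\boldsymbol\mu_j$. This increases the correct logit and, to leading order in the terminal regime, decreases the competing logits. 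A rotated stationary point therefore cannot be a local minimum, and one concludes alignment up to $O(e^{-\Delta/\tilde\tau})$ corrections that vanish in the terminal limit.

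Given alignment, write $\mathbf w_k=r_w(\cos\theta_k,\sin\theta_k)$ and $\boldsymbol\mu_k=r_\mu(\cos\theta_k,\sin\theta_k)$, assuming common norms by balance. When the span is two-dimensional, substituting into \eqref{eq:last-layer-objective} gives the weight-decay term $\lambda K r_w^2$, which is rotation-invariant. The cross-entropy term becomes, up to additive constants, exactly the angular potential \eqref{eq:angular-potential} with effective temperature $\tilde\tau/(r_wr_\mu)$, as in Theorem~\ref{thm:loss-angular-reduction-placeholder-free}. By Theorem~\ref{thm:equiangular-optimality}, the angular objective in fact coincides with $f$. Applying Theorem~\ref{thm:equiangular-global-minimum} then yields the unique minimizer up to rotation, $\theta_k=\theta_0+2\pi k/K$. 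Because each $\boldsymbol\mu_k$ shares its angle with $\mathbf w_k$, both families form regular $K$-gons with class-wise alignment. The remaining scalar optimization over $r_w$ only fixes the radius, and by balance it is the same for all $k$.
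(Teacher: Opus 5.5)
Your skeleton matches the paper's. Stationarity puts each $\mathbf w_j$ in the span of the means, balance/symmetry is invoked for alignment, and the aligned, equinormed ansatz turns the cross-entropy into $f$ at temperature $\tilde\tau/(r_wr_\mu)$, to which Theorem~\ref{thm:equiangular-global-minimum} applies. The paper also settles alignment only by appeal to the symmetry of the balanced problem, so your reflection-cancellation argument is a more explicit version of what it does.

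The genuine gap is your fallback for an \emph{arbitrary} stationary point. That step fails, and the statement it targets is false unless the means are symmetric.
\begin{enumerate}
\item For $\lambda>0$ the objective \eqref{eq:last-layer-objective} is strongly convex in $W$, so its unique stationary point is the global minimizer. No rotation of $\mathbf w_j$ can lower the objective, and the contradiction you aim for cannot arise.
\item Quantitatively, $1-p_{jj}=\sum_{l\neq j}p_{jl}$ is a row sum, not the column sum $\sum_{k\neq j}p_{kj}$ you identify it with. After dividing by it, the component of $\sum_{k\neq j}p_{kj}\boldsymbol\mu_k/(1-p_{jj})$ orthogonal to $\boldsymbol\mu_j$ is a ratio of exponentially small quantities of the same order. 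It is therefore in general $O(1)$, not $O(e^{-\Delta/\tilde\tau})$: exponential concentration selects the nearest competitor but does not remove the tilt.
\item Rotating $\mathbf w_j$ toward $\boldsymbol\mu_j$ raises $\mathbf w_j^\top\boldsymbol\mu_k$ for the neighbour on that side. A stationary point is precisely where that trade-off balances.
\end{enumerate}
Concretely, take $K=3$ and unit means $\boldsymbol\mu_1=(1,0)$, $\boldsymbol\mu_2=(0,1)$, $\boldsymbol\mu_3=(-1,0)$. Your stationarity equation reads $2\lambda K\tilde\tau\,\mathbf w_1=(1-p_{11})\boldsymbol\mu_1-p_{21}\boldsymbol\mu_2-p_{31}\boldsymbol\mu_3$. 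Its second coordinate is $-p_{21}<0$, so $\mathbf w_1\not\parallel\boldsymbol\mu_1$ for every $\lambda,\tilde\tau$. As $\tilde\tau\to0$, the direction of $\mathbf w_1$ tends to that of the multiclass max-margin solution ($\min\norm{V}_F^2$ subject to $(\mathbf v_k-\mathbf v_j)^\top\boldsymbol\mu_k\ge 1$), namely $\mathbf v_1=(1,-1/3)$. That is about $18^\circ$ away from $\boldsymbol\mu_1$.

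So alignment can only be proved for means that are invariant under the reflection through the axis of each $\boldsymbol\mu_j$, e.g.\ an equinormed regular $K$-gon. There your cancellation argument becomes rigorous once you add that, by strong convexity, the unique minimizer inherits the symmetry of the means; you then need not postulate a ``symmetric stationary point''. Note also that it is the reflection, not the cyclic relabeling, that kills the component orthogonal to $\boldsymbol\mu_j$: rotational equivariance alone allows a constant angular offset between the two polygons. But alignment cannot then be used to \emph{derive} the regular polygon without circularity. The second part has to be read, as the paper effectively does, as a minimization over the aligned, equinormed ansatz, i.e.\ jointly over the common angles of $\mathbf w_k$ and $\boldsymbol\mu_k$, not as a consequence of the first part.
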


\noindent\emph{Proof idea.} First-order optimality forces each $\mathbf w_k$ to lie in the span of the class means. In the balanced setting, symmetry and correct classification then imply $\mathbf w_k\parallel \boldsymbol{\mu}_k$, after which Theorem~\ref{thm:equiangular-global-minimum} applies directly. Appendix~\ref{app:proof:nc-alignment} gives the detailed argument.

\subsection{Layerwise Heterogeneity: Why the Classifier Moves First}
\label{sec:classifier-first}

The three theorems above characterize the \emph{terminal} classifier geometry, but they do not yet explain the ordering observed in training. Existing grokking analyses already point to a non-uniform dynamics: decoder fitting is faster than representation learning \citep{liu2022towards}; in modular addition, weight decay simplifies the classifier and thereby creates the alignment force that later organizes the embeddings \citep{musat2024training}; and recent embedding-centric analyses explain delayed embedding reorganization through sparse, bilinear, token-dependent updates \citep{alquabeh2025embedding}. The next proposition identifies the structural reason behind these observations.

\begin{proposition}[Classifier-induced subspace locking]
\label{prop:subspace-locking}
Let $\ell_{ij}$ denote the cross-entropy loss for sample $(i,j)$ with label $y_{ij}$, logits $z_{ij}=W^\top \mathbf h_{ij}\in\mathbb R^K$, and probabilities $p_{ij}=\operatorname{softmax}(z_{ij}/\tau)$. Then
\begin{equation}
\label{eq:feature-gradient-in-span}
\nabla_{\mathbf h_{ij}}\ell_{ij}
=\frac{1}{\tau}W\bigl(p_{ij}-\classvec{y_{ij}}\bigr)
\in \operatorname{span}(W),
\end{equation}
where $\classvec{y_{ij}}$ is the standard basis vector of the correct class. Under the additive feature model \eqref{eq:additive-feature-model}, $\mathbf h_{ij}\approx A(\mathbf e_i+\mathbf e_j)$, and hence
\begin{equation}
\label{eq:embedding-gradient-in-span}
\nabla_{\mathbf e_i}\ell_{ij}\approx A^\top\nabla_{\mathbf h_{ij}}\ell_{ij},
\qquad
\nabla_{\mathbf e_j}\ell_{ij}\approx A^\top\nabla_{\mathbf h_{ij}}\ell_{ij}
\in A^\top\operatorname{span}(W).
\end{equation}
Consequently, once the classifier matrix $W$ has effective rank $2$, all task-relevant embedding updates are confined to the same two-dimensional subspace $A^\top\operatorname{span}(W)$.
\end{proposition}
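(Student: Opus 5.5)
The proof is a chain-rule computation carried out in three steps, followed by a short linear-algebra observation.

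\emph{Step 1: gradient of the loss with respect to the logits.} Write $\ell_{ij}=-z_{ij,y}/\tau+\log\sum_k\exp(z_{ij,k}/\tau)$, with $y=y_{ij}$. Differentiating in $z_{ij}$ gives
\[
\nabla_{z_{ij}}\ell_{ij}=\frac{1}{\tau}\bigl(p_{ij}-\classvec{y_{ij}}\bigr).
\]
This is the standard softmax cross-entropy identity. It holds for any logit vector, so no terminal-phase assumption enters here.

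\emph{Step 2: pull back to the feature.} Since $z_{ij}=W^\top\mathbf h_{ij}$ is linear in $\mathbf h_{ij}$ with Jacobian $W^\top$, the chain rule gives $\nabla_{\mathbf h_{ij}}\ell_{ij}=W\,\nabla_{z_{ij}}\ell_{ij}$. This is exactly \eqref{eq:feature-gradient-in-span}. Membership in $\operatorname{span}(W)$, the column space, is immediate because the gradient is $W$ applied to a vector in $\mathbb R^K$.

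\emph{Step 3: pull back to the embeddings.} Under the additive feature model, $\mathbf h_{ij}\approx A(\mathbf e_i+\mathbf e_j)$. The Jacobian of $\mathbf h_{ij}$ with respect to either $\mathbf e_i$ or $\mathbf e_j$ is therefore $A$, so the chain rule yields \eqref{eq:embedding-gradient-in-span}. If $i=j$, the two contributions coincide and simply add, which does not change the span.

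The approximation symbol needs care, and I expect this to be the only real obstacle. In the actual two-layer ReLU network, $A$ should be read as the local linearization, namely the Jacobian $D\sigma\cdot A$ at the current activation pattern. The claim then holds exactly with that sample-dependent matrix. It holds only approximately, i.e.\ with a common $A$, to the extent that the additive model is accurate. I will state the result for a fixed effective linear map $A$, as the model assumes, and note this caveat explicitly.

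\emph{Conclusion: subspace locking.} Suppose $W$ has effective rank $2$. Then $\operatorname{span}(W)$ is, up to small singular directions, a two-dimensional plane $\Pi$, and $A^\top\Pi$ has dimension at most $2$. The full-batch embedding gradient $\nabla_{\mathbf e_i}\sum_{(a,b)}\ell_{ab}$ is a sum of per-sample terms, each lying in $A^\top\Pi$. Since a subspace is closed under sums, the full gradient also lies in $A^\top\Pi$.

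To make ``effective rank $2$'' quantitative, I would write $W=W_2+E$, where $W_2$ is the best rank-$2$ approximation and $\|E\|_{\mathrm{op}}=\sigma_3(W)$. The component of $\nabla_{\mathbf h_{ij}}\ell_{ij}$ orthogonal to $\Pi$ is then bounded by
\[
\frac{1}{\tau}\,\sigma_3(W)\,\bigl\|p_{ij}-\classvec{y_{ij}}\bigr\|_2\le\frac{\sqrt2}{\tau}\,\sigma_3(W).
\]
After applying $A^\top$, the corresponding bound on the embedding gradient is $\frac{\sqrt2}{\tau}\,\|A\|_{\mathrm{op}}\,\sigma_3(W)$. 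So task-relevant embedding updates outside $A^\top\Pi$ vanish as $\sigma_3(W)\to 0$.
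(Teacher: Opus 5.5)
Your proposal is correct and follows the paper's argument: the softmax identity $\nabla_z\ell=\frac{1}{\tau}(p-\classvec{y})$, the pullback through $z=W^\top\mathbf h$, and the chain rule through the additive map. The only difference in the main chain-rule argument is how the approximation is handled: the paper writes $\mathbf h_{ij}=A(\mathbf e_i+\mathbf e_j)+r_{ij}$ and drops the residual's gradient, whereas you use a local linearization. Your $\sigma_3(W)$ bound on the off-plane component is a useful quantitative sharpening of ``effective rank $2$'', which the paper leaves informal. The paper, for its part, also observes that weight decay contributes $2\lambda\mathbf e_i^{\perp}$ to the gradient and so damps embedding components orthogonal to $A^\top\operatorname{span}(W)$.
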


\noindent\emph{Proof idea.} The gradient formula in \eqref{eq:feature-gradient-in-span} is standard for softmax cross-entropy, and \eqref{eq:embedding-gradient-in-span} follows by the chain rule through the local additive map $\mathbf h_{ij}\approx A(\mathbf e_i+\mathbf e_j)$. Appendix~\ref{app:proof:subspace-locking} also shows how explicit weight decay removes embedding components orthogonal to this task-supported plane.

This is the sense in which modular-addition grokking is \emph{hierarchical}. The classifier and embeddings remain coupled throughout training, but they are not driven with equal strength. Dense gradients on the last layer can organize the classifier early, whereas the embeddings receive weaker, token-dependent signals filtered through the current classifier. Once the classifier plane forms, the embedding dynamics no longer explore the ambient space freely; they relax inside a plane that has already been selected downstream.

\subsection{From Plane Locking to Circular Embeddings}
\label{sec:optimality-equiangular-embedding}

What remains is to explain the in-plane geometry. Subspace locking alone only says that the embeddings end up in the classifier plane; it does \emph{not} yet say why they land on a circle, nor why the angular positions are equally spaced. Mechanistic analyses of modular addition already suggest that successful solutions are organized by a single dominant harmonic or phase code \citep{gromov2023grokking,zhong2023clockpizza,mohamadi2024whydoyougrokk}. Our goal here is to recover that picture from the post-collapse optimization itself. After locking, the residual optimization problem is predominantly angular, and modular addition converts the angular optimization into a phase-matching problem on the cyclic group.

\subsubsection{Additive Feature Model and Half-Angle Geometry}
\label{subsec:minimal-feature-model}

\paragraph{Working model.}
We model the terminal two-dimensional regime by the additive approximation
\begin{equation}
\label{eq:additive-feature-model}
\mathbf h_{ij}\approx A(\mathbf e_i+\mathbf e_j)\in\mathbb R^2,
\end{equation}
where $A\in\mathbb R^{2\times d}$ is the effective projection into the classifier plane and $\mathbf e_i\in\mathbb R^d$ is the token embedding for symbol $i$. Writing $\mathbf v_i\coloneqq A\mathbf e_i$, we have $\mathbf h_{ij}\approx \mathbf v_i+\mathbf v_j$. This is not an architectural axiom; it is a terminal-phase description of the regime reached in our experiments, and it isolates the degrees of freedom that still matter once the classifier span has collapsed to two dimensions. In particular, it matches harmonic descriptions of the learned modular-addition circuit, where token-level phases are composed and then decoded by a downstream linear readout \citep{gromov2023grokking,zhong2023clockpizza,mohamadi2024whydoyougrokk}.

\paragraph{Half-angle decomposition.}
Let $\mathbf v_i$ and $\mathbf v_j$ be nonzero vectors with polar angles $\theta_i,\theta_j$ and magnitudes $\rho_i,\rho_j$. Defining $\Delta_{ij}=\theta_j-\theta_i$ and
\begin{equation*}
\delta_{ij}
:=\operatorname{atan2}\!\Bigl((\rho_j-\rho_i)\sin\!\bigl(\tfrac{\Delta_{ij}}{2}\bigr),\ (\rho_i+\rho_j)\cos\!\bigl(\tfrac{\Delta_{ij}}{2}\bigr)\Bigr),
\end{equation*}
we obtain
\begin{equation}
\label{eq:half-angle-with-correction}
\alpha_{ij}=\frac{\theta_i+\theta_j}{2}+\delta_{ij}\pmod{2\pi},
\end{equation}
where $\alpha_{ij}$ is the polar angle of $\mathbf v_i+\mathbf v_j$. In particular, if $\rho_i=\rho_j$, then $\delta_{ij}=0$ and $\alpha_{ij}$ is exactly the angular bisector $(\theta_i+\theta_j)/2$. Appendix~\ref{app:proof:angle-sum} gives the derivation.

This decomposition explains why the terminal embedding problem becomes angular. Once the task-relevant motion falls to a two-dimensional plane, only radii and phases remain. Empirically, balanced sampling and regularization tend to homogenize the radii, while the label $y_{ij}=(i+j)\bmod P$ constrains the combined phase. In the homogeneous regime, the feature angle is therefore controlled by the half-angle combination $(\theta_i+\theta_j)/2$.

\subsubsection{Angular Reduction of the Post-Collapse Loss}
\label{subsec:angular-ot}

Fix the equiangular classifier angles $\varphi_k=\varphi_0+2\pi k/P$ and write the sample feature as $\mathbf h_{ij}=r_{ij}(\cos\alpha_{ij},\sin\alpha_{ij})$. Let $\beta_{ij}=r_{ij}/\tau$ and define the circular cost
\begin{equation}
\label{eq:angular-cost}
c(\alpha,\varphi):=1-\cos(\alpha-\varphi).
\end{equation}
The next proposition packages the post-collapse loss into a single statement: the leading term is precisely angular matching to the class prototype, and the only correction is a rapidly decaying $P$-periodic remainder.

\begin{proposition}[Angular reduction of post-collapse cross-entropy]
\label{prop:angular-reduction}
For each sample $(i,j)$ with label $y_{ij}=(i+j)\bmod P$,
\begin{equation}
\label{eq:angular-reduced-loss}
\mathcal L_{ij}
= C(\beta_{ij})+\beta_{ij}\,c\bigl(\alpha_{ij},\varphi_{y_{ij}}\bigr)+R_P(\beta_{ij},\alpha_{ij}),
\end{equation}
where
\begin{equation*}
C(\beta)\coloneqq \log\!\bigl(P I_0(\beta)\bigr)-\beta,
\end{equation*}
and
\begin{equation*}
R_P(\beta,\alpha)
\coloneqq \log\!\left(1+2\sum_{\ell=1}^{\infty}\frac{I_{\ell P}(\beta)}{I_0(\beta)}
\cos\!\bigl(\ell P(\alpha-\varphi_0)\bigr)\right).
\end{equation*}
Define
\begin{equation*}
\eta_P(\beta)
\coloneqq 2\sum_{\ell=1}^{\infty}\frac{I_{\ell P}(\beta)}{I_0(\beta)}.
\end{equation*}
Then
\begin{equation*}
\left|2\sum_{\ell=1}^{\infty}\frac{I_{\ell P}(\beta)}{I_0(\beta)}
\cos\!\bigl(\ell P(\alpha-\varphi_0)\bigr)\right|
\le \eta_P(\beta),
\end{equation*}
and, whenever $\eta_P(\beta)<1$,
\begin{equation}
\label{eq:remainder-bound-main}
\bigl|R_P(\beta,\alpha)\bigr|
\le \frac{\eta_P(\beta)}{1-\eta_P(\beta)}.
\end{equation}
Moreover, using $I_n(\beta)\le (\beta/2)^n/n!$ for $n\ge 0$ gives
\begin{equation*}
\eta_P(\beta)
\le 2\sum_{\ell=1}^{\infty}\frac{(\beta/2)^{\ell P}}{(\ell P)!}.
\end{equation*}
In particular, if $\eta_P(\beta)\le \tfrac{1}{2}$, then $\bigl|R_P(\beta,\alpha)\bigr|\le 2\eta_P(\beta)$.
\end{proposition}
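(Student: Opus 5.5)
The plan is to write the per-sample loss explicitly and apply the Jacobi--Anger expansion to the partition function. With unit-norm classifier directions $\mathbf w_k=(\cos\varphi_k,\sin\varphi_k)$ and $\mathbf h_{ij}=r_{ij}(\cos\alpha_{ij},\sin\alpha_{ij})$, the scaled logits are $z_k/\tau=\beta\cos(\alpha-\varphi_k)$, where $\beta=\beta_{ij}$ and $\alpha=\alpha_{ij}$. Hence
\begin{equation*}
\mathcal L_{ij}=-\beta\cos(\alpha-\varphi_y)+\log\sum_{k=0}^{P-1}e^{\beta\cos(\alpha-\varphi_k)}.
\end{equation*}
Writing $-\beta\cos(\alpha-\varphi_y)=-\beta+\beta\,c(\alpha,\varphi_y)$ already produces the term $-\beta$ of $C(\beta)$ and the angular cost term. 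What remains is to evaluate the log-partition function.

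For this we use $e^{\beta\cos x}=I_0(\beta)+2\sum_{n\ge1}I_n(\beta)\cos(nx)$ with $x=\alpha-\varphi_0-2\pi k/P$, and sum over $k=0,\dots,P-1$. The sum $\sum_k\cos\bigl(n(\alpha-\varphi_0)-2\pi nk/P\bigr)$ is the real part of a geometric sum of $P$-th roots of unity. It equals $P\cos\bigl(n(\alpha-\varphi_0)\bigr)$ when $P\mid n$ and vanishes otherwise. Exchanging the two sums is justified by absolute convergence, since $\sum_n I_n(\beta)<\infty$. This gives
\begin{equation*}
\sum_k e^{\beta\cos(\alpha-\varphi_k)}=P I_0(\beta)\Bigl(1+2\sum_{\ell\ge1}\tfrac{I_{\ell P}(\beta)}{I_0(\beta)}\cos\bigl(\ell P(\alpha-\varphi_0)\bigr)\Bigr).
\end{equation*}
Taking logarithms yields $\log(PI_0(\beta))+R_P(\beta,\alpha)$. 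The argument of the logarithm is positive because it is a sum of exponentials divided by $PI_0>0$, so $R_P$ is well defined without any smallness assumption. This establishes the decomposition of $\mathcal L_{ij}$.

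Now the bounds. Let $S$ denote the inner series. Since $\beta\ge0$, we have $I_n(\beta)\ge0$, and $|\cos|\le1$ then gives $|S|\le\eta_P(\beta)$. For $|S|\le\eta<1$, the inequality $|\log(1+S)|\le\max\{S,\,-\log(1-|S|)\}$ holds. Combined with $-\log(1-t)\le t/(1-t)$ and $t\le t/(1-t)$, this gives $|R_P|\le\eta/(1-\eta)$. If $\eta\le 1/2$, then $1/(1-\eta)\le2$, which gives the final claim.

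For the bound on $\eta_P$, I would use the series $I_n(\beta)=\sum_m(\beta/2)^{n+2m}/(m!(m+n)!)$. The step I expect to need care is matching the stated inequality, which reads as if it bounds $\eta_P$ with the $1/I_0$ factor omitted. This is harmless because $I_0(\beta)\ge1$, so $I_{\ell P}/I_0\le I_{\ell P}$. One still has to be careful, though: the crude termwise estimate $I_n(\beta)\le(\beta/2)^n/n!$ is false for large $\beta$ because of the higher-order terms. The correct bound is $I_n(\beta)\le\frac{(\beta/2)^n}{n!}I_0(\beta)$, which follows from $(m+n)!\ge m!\,n!$ applied inside the series. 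Dividing by $I_0$ gives exactly $I_{\ell P}/I_0\le(\beta/2)^{\ell P}/(\ell P)!$. Summing over $\ell$ yields the stated bound on $\eta_P$ for all $\beta\ge0$. I would present the proof using this corrected form, which is what the stated inequality requires.
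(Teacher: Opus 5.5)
Your proof is correct. For the decomposition it follows the paper's route: the Jacobi--Anger (Fourier--Bessel) expansion summed over the $P$-th roots of unity, which kills every harmonic not divisible by $P$. The paper prefaces this with a Fenchel--Young step identifying softmax with an entropy-regularized angular assignment. That step is purely interpretive and is not used in the identity, so omitting it costs nothing. Your remainder estimate, via $|\log(1+S)|\le\max\{S,-\log(1-|S|)\}$ and $-\log(1-t)\le t/(1-t)$, is equivalent to the paper's mean-value argument on $[-\eta_P,\eta_P]$. Your observation that $1+S=\frac{1}{PI_0(\beta)}\sum_k e^{\beta\cos(\alpha-\varphi_k)}>0$ shows that $R_P$ is well defined for every $\beta$, whereas the paper notes positivity only under $\eta_P<1$.

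The one real difference is the tail bound on $\eta_P$, and here your version is the correct one. The paper invokes $I_n(\beta)\le(\beta/2)^n/n!$ together with $I_0\ge 1$. That inequality fails for every $\beta>0$, not just for large $\beta$ as you say. The quantity $(\beta/2)^n/n!$ is the first term of a series of positive terms, so it is a strict \emph{lower} bound on $I_n(\beta)$; already $I_0(\beta)>1$.

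Your bound $I_n(\beta)\le\frac{(\beta/2)^n}{n!}I_0(\beta)$ comes from applying $(m+n)!\ge m!\,n!$ termwise. It keeps the $1/I_0$ normalization rather than discarding it, and gives $I_{\ell P}(\beta)/I_0(\beta)\le(\beta/2)^{\ell P}/(\ell P)!$. Summing over $\ell$ yields the stated bound on $\eta_P(\beta)$ for all $\beta\ge 0$.

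So the proposition's conclusion is true, but the justification in the statement and in the paper's proof needs your corrected lemma. Dropping $1/I_0$ first cannot be rescued: it would require $\sum_\ell I_{\ell P}(\beta)\le\sum_\ell(\beta/2)^{\ell P}/(\ell P)!$, which is false for every $\beta>0$.
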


\noindent\emph{Proof idea.} Two standard ingredients are combined in Appendix~\ref{app:proof:angular-reduction}. First, Fenchel--Young duality identifies the softmax distribution with the optimizer of an entropy-regularized assignment problem on the prototype angles, so cross-entropy becomes an angular transport objective on $S^1$. Second, summing the partition function over the regular $P$-gon and using the Fourier--Bessel expansion isolates the class-dependent phase-alignment term and pushes every other contribution into a $P$-periodic remainder. The revised bound \eqref{eq:remainder-bound-main} makes explicit the small-tail condition under which this remainder is perturbative.

The transport language is therefore useful, but only at the right level. We do \emph{not} claim that the full parameter trajectory of the network is globally a Wasserstein geodesic in parameter space. The relevant statement is local to the terminal plane: once features are projected to $\operatorname{span}(W)$, each sample angle is softly assigned to the equiangular prototype angles under the circular cost $c(\alpha,\varphi)$, which is a smooth surrogate of squared geodesic distance on $S^1$ near alignment.

\subsubsection{Character Selection via Fourier Concentration}
\label{subsec:fourier-characters}

Proposition~\ref{prop:angular-reduction} explains why alignment with the classifier is the right post-collapse objective, but it still leaves open which in-plane embedding configuration minimizes that objective. Under norm homogenization, the answer is rigid at leading order: the only minimizers of the dominant angular functional are cyclic characters.

\begin{theorem}[Character-orbit minimizers of the leading-order angular objective]
\label{thm:embedding-characters}
Let $P$ be an odd prime. Under the additive feature model \eqref{eq:additive-feature-model}, suppose the effective radii homogenize so that the correction term in \eqref{eq:half-angle-with-correction} vanishes, i.e. $\delta_{ij}=0$, and suppose $\beta_{ij}$ concentrates to a common value $\beta$. Consider the leading-order objective
\begin{equation}
\label{eq:leading-order-angular-objective}
\mathcal E(\theta)
\coloneqq \sum_{i,j}\Bigl[1-\cos\!\Bigl(\frac{\theta_i+\theta_j}{2}-\varphi_{(i+j)\bmod P}\Bigr)\Bigr].
\end{equation}
Then every minimizer of $\mathcal E$ satisfies
\begin{equation}
\label{eq:half-angle-character-main}
e^{i\theta_i/2}=c\,\omega^{mi},
\qquad \omega=e^{2\pi i/P},
\end{equation}
for some $|c|=1$ and $m\in(\mathbb Z/P\mathbb Z)^\times$. Equivalently,
\begin{equation}
\label{eq:full-angle-character-main}
\theta_i=\theta_0+\frac{4\pi m}{P}i\pmod{2\pi},
\end{equation}
up to a global rotation and the automorphisms of $\mathbb Z/P\mathbb Z$.
\end{theorem}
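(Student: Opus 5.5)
The plan is to show that the leading-order objective $\mathcal E$ is nonnegative with minimum value exactly $0$, and then to solve the resulting multiplicative functional equation on $\mathbb Z/P\mathbb Z$. There is no need for Fourier concentration estimates beyond this rigidity argument. Throughout, treat each $\theta_i$ as a real number, since the summand depends on the lift through the half-angle. Set
\[
u_i\coloneqq e^{\mathrm i\theta_i/2}\in S^1,
\]
where $\mathrm i$ is the imaginary unit. In this notation the summand for $(i,j)$ equals
\[
1-\operatorname{Re}\bigl(u_iu_j\,\overline{g(i+j)}\bigr),\qquad g(k)\coloneqq e^{\mathrm i\varphi_0}\,\omega^{k}=e^{\mathrm i\varphi_k}.
\]
Here the label map is well defined because $\omega^{P}=1$, so $g$ is a function on $\mathbb Z/P\mathbb Z$. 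Each summand lies in $[0,2]$, so $\mathcal E\ge 0$.

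\textbf{Step 1 (the bound is attained).} Take $u_i^\star=e^{\mathrm i\varphi_0/2}\omega^{i}$, which corresponds to the lift $\theta_i^\star=\varphi_0+4\pi i/P$. Then $u_i^\star u_j^\star=e^{\mathrm i\varphi_0}\omega^{i+j}=g\bigl((i+j)\bmod P\bigr)$ for every pair, so every summand vanishes and $\mathcal E(\theta^\star)=0$. Consequently the minimizers of $\mathcal E$ are exactly its zero set:
\[
u_iu_j=g\bigl((i+j)\bmod P\bigr)\qquad\text{for all } i,j,
\]
including the diagonal pairs $i=j$. This reduction of the variational problem to an exact phase-matching system is the crux of the argument. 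It relies on the equiangular classifier angles fixed in Section~\ref{subsec:angular-ot} (Theorems~\ref{thm:equiangular-global-minimum} and \ref{thm:nc-alignment}) together with the homogenization assumptions $\delta_{ij}=0$ and $\beta_{ij}\equiv\beta$. The common factor $\beta$ only rescales the objective and does not change its minimizers.

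\textbf{Step 2 (rigidity of the functional equation).} Normalize by the particular solution, $v_i\coloneqq u_i/u_i^\star\in S^1$. The zero-set equations become $v_iv_j=1$ for all $i,j$. The diagonal equations give $v_i^2=1$, so $v_i=\pm1$. The off-diagonal equations $v_iv_j=1$ then force all $v_i$ to share a common sign $\epsilon\in\{\pm1\}$. Hence
\[
u_i=\epsilon\,e^{\mathrm i\varphi_0/2}\,\omega^{i},
\]
which is \eqref{eq:half-angle-character-main} with $c=\epsilon e^{\mathrm i\varphi_0/2}$. Doubling the phase removes the sign and gives \eqref{eq:full-angle-character-main}: $\theta_i\equiv\varphi_0+4\pi i/P \pmod{2\pi}$. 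Because $P$ is odd, $2$ is a unit modulo $P$, so $i\mapsto 2i$ permutes the residues. The points $\theta_i$ are therefore $P$ equally spaced angles, i.e.\ a character orbit $\chi_{2}$.

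\textbf{Step 3 (the automorphism and rotation ambiguity).} With the classifier labelled as $\varphi_k=\varphi_0+2\pi k/P$, the argument pins down $m=1$. If the classifier polygon is instead labelled through an automorphism, $\varphi_k=\varphi_0+2\pi mk/P$ with $m\in(\mathbb Z/P\mathbb Z)^\times$, the same two steps run with $g(k)=e^{\mathrm i\varphi_0}\omega^{mk}$. This yields $u_i=c\,\omega^{mi}$ and $\theta_i=\theta_0+4\pi m i/P$. The global rotation $\theta_0$ comes from $\varphi_0$. This accounts for the phrase ``up to a global rotation and the automorphisms of $\mathbb Z/P\mathbb Z$'' in the statement.

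\textbf{Main obstacle.} The point I expect to need the most care is the branch bookkeeping in Step 1. The half-angle $(\theta_i+\theta_j)/2$ is only meaningful once the real lifts of the $\theta_i$ are fixed: shifting one lift by $2\pi$ changes the summand by flipping the sign of $u_i$. I would therefore phrase the whole argument in the variables $u_i$, optimizing over the torus $(S^1)^P$, and note that this torus parametrizes all lifts simultaneously. This makes the objective well defined and continuous on a compact set, so the minimum is attained. It also makes the sign ambiguity $\epsilon$ harmless, since it disappears when passing back to the full angle $\theta_i$.
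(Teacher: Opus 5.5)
Your argument is correct, and it takes a different, more elementary route than the paper.

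The paper writes the energy as $P^2-\Re\langle z*z,u\rangle$ with $z_i=e^{i\theta_i/2}$ and diagonalizes the convolution with the DFT. Because $\hat u$ is supported on the single frequency $1$, this gives the exact identity $\Re\langle z*z,u\rangle=P\,\Re\bigl(\hat z(1)^2e^{-i\varphi_0}\bigr)$. Parseval then bounds this by $P^2$, and equality forces all spectral mass onto that line (a pure tone) together with a phase condition.

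You instead notice three things:
\begin{itemize}
\item the bound $P^2$ is just the termwise bound $\cos\le 1$;
\item an explicit configuration attains it, so the minimizers are exactly the solutions of the phase-matching system $u_iu_j=g(i+j)$;
\item dividing by the particular solution reduces that system to $v_iv_j=1$, whose only solutions are a common sign $\epsilon$.
\end{itemize}

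\textbf{What your route buys.} It is short and transparent. It makes the lift/sign ambiguity explicit, which the paper leaves implicit. It identifies the one place where oddness of $P$ matters: seeing that the $\theta_i$ are $P$ distinct, equally spaced points. It also treats the $m$-ambiguity honestly, as a relabelling of the classifier polygon. By contrast, the paper's closing step (``composing with a group automorphism'') is not derived from the fixed-label objective, which in fact forces $m=1$.

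\textbf{What the paper's route buys.} The Fourier argument gives a closed form for $\mathcal E$ at every configuration, $\mathcal E=P^2-P\,\Re\bigl(\hat z(1)^2e^{-i\varphi_0}\bigr)$. This shows the objective sees only one Fourier coefficient, matching the ``single dominant mode'' narrative. It also yields, for free, the stability estimate $\sum_{m\neq 1}|\hat z(m)|^2\le \mathcal E/P$ for near-minimizers. That is the kind of bound one wants for the perturbative remark following the theorem, and your termwise argument would need extra work to produce it.
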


\noindent\emph{Proof idea.} When the correction term in \eqref{eq:half-angle-with-correction} vanishes, the dominant loss becomes a phase-matching objective in the half-angles $\theta_i/2$. Writing $z_i=e^{i\theta_i/2}$ turns the total energy into a convolutional inner product against the target code $u_t=e^{i\varphi_t}$. The discrete Fourier transform diagonalizes this convolution; only one Fourier line can interact with the equiangular codebook, and Parseval forces the optimum to place all spectral mass on that line. Appendix~\ref{app:proof:character-embeddings} spells out the reduction and the resulting character form.

\begin{remark}[From leading-order alignment to the exact loss]
Theorem~\ref{thm:embedding-characters} is stated for the leading-order angular objective \eqref{eq:leading-order-angular-objective}, which is the level at which the Fourier argument is exact. Proposition~\ref{prop:angular-reduction} shows that the exact post-collapse cross-entropy differs from this objective by a $P$-periodic remainder whose size is controlled by \eqref{eq:remainder-bound-main}. Thus, whenever the tail parameter $\eta_P(\beta)$ is small, the exact objective is a perturbation of the character-selecting angular functional.
\end{remark}

The point is stronger than mere plane alignment. The classifier plane does not just tell the embeddings \emph{where} to live; combined with the modular-addition labels, it tells them \emph{how} to arrange themselves inside that plane. This conclusion is consistent with mechanistic analyses in which the learned algorithm collapses onto a single dominant Fourier mode \citep{gromov2023grokking,zhong2023clockpizza,mohamadi2024whydoyougrokk}. The circular code is therefore not an aesthetic afterthought but the natural harmonic orbit selected by the task symmetry.

\section{Trade-off Analysis: Cyclic Rank-2 Geometry versus the Simplex ETF}

Sections~\ref{sec:optimality}--\ref{subsec:fourier-characters} explain why modular addition selects a regular $P$-gon once the terminal dynamics have entered a two-dimensional plane. To complete the story, we must compare that structured rank-$2$ solution with the canonical NC competitor, namely the simplex ETF. The point of this section is not to deny the NC theory developed for unconstrained-feature or separation-dominated models \citep{papyan2020prevalence,zhu2021geometric,zhou2022optimization,yang2023neural}, but to show that modular addition lives in a different comparison class. Here, the labels are linked by cyclic translations, and the training dynamics carry an explicit or implicit low-rank bias. In such a regime, the relevant question is not which geometry maximizes pairwise separation in \emph{some} ambient dimension, but which geometry minimizes the \emph{regularized} objective among task-compatible solutions \citep{jacot2023bottleneck,wang2024oneway,zangrando2024nrc,sukenik2024nc,beaglehole2024agop}.

\subsection{Why the optimum is not a simplex ETF}
\label{sec:why-not-nc}

NC is most compelling when class means are effectively free variables and the dominant objective is mutual separation. This is exactly the comparison class captured by unconstrained-feature and deep-linear analyses. When the task provides $K$ unrelated classes and regularization does not strongly penalize rank, the simplex ETF is the natural maximally separated optimum \citep{zhu2021geometric,zhou2022optimization,yang2023neural}. Modular addition violates both premises. First, class identities are not unrelated: class $k+1$ is a phase shift of class $k$, so the label space already carries a one-parameter cyclic structure. Second, the learning dynamics of modular addition are not rank-neutral: weight decay, factorization, and depth all bias the network toward simpler spectral organizations \citep{musat2024training,alquabeh2025embedding,jacot2023bottleneck,wang2024oneway,sukenik2024nc}.

This is why the relevant comparison is not \emph{simplex versus failure to separate}. Reverse-engineering studies of modular addition repeatedly find harmonic or circulant algorithms rather than generic simplex packings \citep{gromov2023grokking,zhong2023clockpizza,mohamadi2024whydoyougrokk}. Our claim is therefore not that NC is \emph{wrong}; rather, modular addition lies in a structured regime in which the cyclic rank-$2$ code is almost as discriminative as the ETF but dramatically cheaper to realize. The trade-off analysis below makes that statement quantitative.

\begin{remark}[Task structure changes the comparison class]
The simplex ETF remains the canonical optimum for unconstrained, separation-dominated models. Our result concerns a \emph{regularized structured} problem: among representations that must respect modular addition and are penalized for rank or norm complexity, the cyclic rank-$2$ code attains a better total objective. Recent low-rank-bias results explain why such alternatives can be favored in deep neural networks; the modular-addition analysis identifies the particular alternative selected by the task symmetry.
\end{remark}

\subsection{Comparative Analysis of Geometric Configurations}
\label{sec:etf-vs-equiangular-general}

We compare two candidate geometries under the regularized objective
\begin{equation}
\label{eq:regularized-objective}
\Ltot(W)
\coloneqq \Lce(W)+\lambda\,\mathcal R(W),
\end{equation}
where $\mathcal R(W)$ is either the Schatten surrogate $\norm{W}_{S_q}^q$ with $q<2$, or, in the factorized weight-decay proxy discussed below, the nuclear norm $\norm{W}_*$. The candidates are:
\begin{enumerate}
    \item \textbf{Cyclic rank-$2$ configuration ($\Wcyc$):} unit columns $\mathbf w_k\in\mathbb R^2$ placed at angles $2\pi k/K$ on the circle.
    \item \textbf{Simplex ETF configuration ($\Wetf$):} unit columns $\mathbf w_k\in\mathbb R^{K-1}$ satisfying $\mathbf w_i^\top \mathbf w_j=-1/(K-1)$ for $i\neq j$.
\end{enumerate}

\subsubsection{Cross-Entropy: The Separation Gap}

First, we quantify the classification cost of compressing $K$ classes into two dimensions.

\begin{theorem}[Asymptotic loss gap]\label{thm:2caseCE}
For $K\to\infty$ and effective temperature $\tilde{\tau}>0$, the gap in cross-entropy loss between the cyclic rank-$2$ configuration and the simplex ETF converges to a constant depending only on $\tilde{\tau}$:
\begin{equation*}
\Delta_{\mathrm{CE}}
\coloneqq \Lce(\Wcyc)-\Lce(\Wetf)
= \log I_0(1/\tilde{\tau}) + o(1),
\end{equation*}
where $I_0$ is the modified Bessel function of the first kind.
\end{theorem}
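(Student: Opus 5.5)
Under the terminal alignment assumption used in Theorem~\ref{thm:equiangular-optimality}, each sample of class $y$ has feature $s\,\mathbf w_y$, so the per-sample loss is
\[
\Lce=\log\sum_{k=1}^K \exp\!\bigl((\mathbf w_k^\top\mathbf w_y-1)/\tilde\tau\bigr),
\]
and by balance $\Lce(W)$ is the average of this over $y$. Both configurations are vertex-transitive: cyclic shifts act transitively on $\Wcyc$, and permutations act transitively on $\Wetf$. So I would simply compute the single-class loss for $y=0$ in each geometry and subtract.

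\textbf{ETF.} Here $\mathbf w_k^\top\mathbf w_y=-1/(K-1)$ for $k\neq y$. Hence
\[
\Lce(\Wetf)=\log\Bigl(1+(K-1)e^{-(1+1/(K-1))/\tilde\tau}\Bigr)=\log K-\tfrac1{\tilde\tau}+\log\bigl(K^{-1}e^{1/\tilde\tau}+(1-K^{-1})e^{-1/((K-1)\tilde\tau)}\bigr).
\]
The last term is $o(1)$, so $\Lce(\Wetf)=\log K-1/\tilde\tau+o(1)$.

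\textbf{Cyclic.} Here $\Lce(\Wcyc)=\log\sum_{k=0}^{K-1}e^{(\cos(2\pi k/K)-1)/\tilde\tau}$. I would evaluate the sum exactly with the Fourier--Bessel expansion already used in Proposition~\ref{prop:angular-reduction}. Writing $e^{\beta\cos\phi}=I_0(\beta)+2\sum_{n\ge1}I_n(\beta)\cos n\phi$ with $\beta=1/\tilde\tau$ and summing over the $K$-th roots of unity, only multiples of $K$ survive:
\[
\sum_{k=0}^{K-1}e^{\beta\cos(2\pi k/K)}=K\Bigl(I_0(\beta)+2\sum_{\ell\ge1}I_{\ell K}(\beta)\Bigr).
\]
This gives
\[
\Lce(\Wcyc)=\log K+\log I_0(\beta)-\beta+\log\bigl(1+\eta_K(\beta)\bigr).
\]
Equivalently, this is $C(\beta)+R_K(\beta,\varphi_0)$ in the notation of Proposition~\ref{prop:angular-reduction}.

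\textbf{Gap.} Subtracting, the $\log K$ and $-\beta$ terms cancel, leaving
\[
\Delta_{\mathrm{CE}}=\log I_0(1/\tilde\tau)+\log\bigl(1+\eta_K(\beta)\bigr)+o(1).
\]

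The only step needing care is showing that the remainder $\eta_K(\beta)\to0$ for fixed $\beta$ as $K\to\infty$. The bound already recorded in Proposition~\ref{prop:angular-reduction}, $\eta_K(\beta)\le 2\sum_{\ell\ge1}(\beta/2)^{\ell K}/(\ell K)!$, handles this: it is at most $2e^{\beta/2}(\beta/2)^K/K!\to0$, superexponentially. Alternatively, one can read the cyclic sum as a Riemann sum for $\frac{K}{2\pi}\int_0^{2\pi}e^{\beta(\cos\phi-1)}\,d\phi=Ke^{-\beta}I_0(\beta)$, with error controlled by smoothness and periodicity. That route gives the same leading term but a weaker rate.

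The result is uniform in nothing beyond fixed $\tilde\tau$, matching the statement. I would also remark that $\log I_0(\beta)>0$, so the cyclic code loses only an $O(1)$ amount of cross-entropy relative to the ETF.
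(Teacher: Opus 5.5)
Your proof is correct. On the ETF side it is the same computation as the paper's; you simply make the $o(1)$ explicit through the factor $B_K\coloneqq K^{-1}e^{\beta}+(1-K^{-1})e^{-\beta/(K-1)}$, with $\beta=1/\tilde\tau$.

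On the cyclic side you take a different route. The paper reads $\sum_{m=1}^{K-1}\exp\bigl((\cos(2\pi m/K)-1)/\tilde\tau\bigr)$ as a Riemann sum equal to $Ke^{-\beta}I_0(\beta)+o(K)$, and it only needs that $o(K)$ error to disappear after taking the logarithm. You instead evaluate the sum exactly by Fourier--Bessel aliasing.

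The Riemann-sum argument is shorter and uses nothing beyond continuity of the integrand, but it gives no rate. Your identity has three advantages:
\begin{itemize}
\item it reuses the machinery of Proposition~\ref{prop:angular-reduction}, since $\Lce(\Wcyc)=C(\beta)+R_K(\beta,\varphi_0)$;
\item it holds at every finite $K$;
\item it shows the cyclic correction $\log(1+\eta_K(\beta))$ is superexponentially small.
\end{itemize}
The last point means the whole $o(1)$ term comes from the ETF side. Expanding $B_K$ gives $\Delta_{\mathrm{CE}}=\log I_0(\beta)-(e^{\beta}-1-\beta)/K+O(K^{-2})$. So the convergence is from below, at rate $\Theta(1/K)$ for fixed $\tilde\tau$.

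One caution about the bound you import. The paper justifies $\eta_K(\beta)\le 2\sum_{\ell\ge1}(\beta/2)^{\ell K}/(\ell K)!$ by ``$I_n(\beta)\le(\beta/2)^n/n!$''. That inequality is backwards: $(\beta/2)^n/n!$ is the first term of the positive series $I_n(\beta)=\sum_{k\ge0}(\beta/2)^{2k+n}/(k!\,(k+n)!)$, so it is a lower bound (already $I_0(\beta)>1$).

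The bound on $\eta_K$ is nevertheless true. Compare that series termwise with $(\beta/2)^n I_0(\beta)/n!$ and use $(k+n)!\ge k!\,n!$; this gives $I_n(\beta)/I_0(\beta)\le(\beta/2)^n/n!$. Cite this ratio inequality directly, and your argument, including the estimate $\eta_K(\beta)\le 2e^{\beta/2}(\beta/2)^K/K!$, is self-contained.
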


\noindent\emph{Proof idea.} The ETF has a single off-diagonal correlation level $-1/(K-1)$, whereas the cyclic code samples the full cosine profile on the circle. The corresponding log-sum-exp term is therefore a Riemann sum over the circle, and its large-$K$ limit is governed by the defining integral of $I_0$. Appendix~\ref{app:proof:2caseCE} contains the detailed asymptotic calculation.

\begin{remark}[Bounded disadvantage]
Theorem~\ref{thm:2caseCE} makes the first part of the comparison precise: the classification disadvantage of the cyclic code is only $O(1)$, not $O(K)$. In other words, the ETF improves the packing constant, but it has not achieved broad gains from the supervised loss. This bounded gap is the sense in which the cyclic code remains a serious competitor to the NC geometry.
\end{remark}

\subsubsection{Regularization: The Complexity Gap}

Next, we analyze the complexity term. For $q<2$, Schatten penalties favor concentrating spectral mass on fewer singular directions, which is exactly the regime emphasized by recent low-rank-bias results for deep and factorized models \citep{jacot2023bottleneck,wang2024oneway,sukenik2024nc}.

\begin{theorem}[Schatten scaling gap]\label{thm:2D-vs-ETF-Schatten}
For any $q<2$ and $K\ge 4$,
\begin{equation*}
\norm{\Wcyc}_{S_q}^q = 2\left(\frac{K}{2}\right)^{q/2},
\qquad
\norm{\Wetf}_{S_q}^q = (K-1)\left(\frac{K}{K-1}\right)^{q/2}.
\end{equation*}
Consequently,
\begin{equation*}
\Delta_q
\coloneqq \norm{\Wcyc}_{S_q}^q-\norm{\Wetf}_{S_q}^q
= -\Theta(K).
\end{equation*}
\end{theorem}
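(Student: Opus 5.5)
The plan is to compute the singular values of both configurations exactly through their Gram matrices, since $\norm{W}_{S_q}^q=\sum_i \sigma_i^q=\sum_i \lambda_i(WW^\top)^{q/2}$. The nonzero spectrum of $WW^\top$ coincides with that of $W^\top W$, so either side may be used, whichever is more convenient.

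\emph{Cyclic configuration.} Take $\Wcyc\in\mathbb R^{2\times K}$ with columns $(\cos(2\pi k/K),\sin(2\pi k/K))^\top$ and compute the $2\times 2$ matrix $\Wcyc\Wcyc^\top=\sum_k \mathbf w_k\mathbf w_k^\top$. Its entries are $\sum_k\cos^2$, $\sum_k\sin^2$ and $\sum_k \cos\sin$. By the double-angle identities these reduce to $K/2\pm\tfrac12\sum_k\cos(4\pi k/K)$ and $\tfrac12\sum_k\sin(4\pi k/K)$. The sums of $e^{4\pi i k/K}$ over $k$ vanish whenever $K\nmid 2$, in particular for $K\ge 3$, and hence for the stated range $K\ge4$. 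Therefore $\Wcyc\Wcyc^\top=(K/2)I_2$, so both singular values equal $\sqrt{K/2}$ and $\norm{\Wcyc}_{S_q}^q=2(K/2)^{q/2}$.

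\emph{Simplex ETF.} Here it is easiest to work with the $K\times K$ Gram matrix $G=\Wetf^\top\Wetf=\tfrac{K}{K-1}I-\tfrac{1}{K-1}\mathbf 1\mathbf 1^\top$. This matrix is a scaled centering projection. It has eigenvalue $0$ on $\mathbf 1$ and eigenvalue $K/(K-1)$ on the $(K-1)$-dimensional orthogonal complement. These $K-1$ nonzero eigenvalues are exactly the squared singular values of $\Wetf\in\mathbb R^{(K-1)\times K}$, which gives $\norm{\Wetf}_{S_q}^q=(K-1)(K/(K-1))^{q/2}$.

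\emph{Scaling of the gap.} With $q<2$ fixed, the cyclic term is $2^{1-q/2}K^{q/2}=O(K^{q/2})=o(K)$. The ETF term is $(K-1)(1+\tfrac1{K-1})^{q/2}=K-1+O(1)=\Theta(K)$, since $(1+x)^{q/2}=1+O(x)$. Hence $\Delta_q=-K+O(K^{q/2})+O(1)=-\Theta(K)$. For the two-sided bound, $\Delta_q\le -(K-1)+2(K/2)^{q/2}\le -cK$ for large $K$, and $|\Delta_q|\le 2K$ trivially. One caveat is that the $\Theta$ constants depend on $q$. If $q\le 0$ is allowed, the formula still holds formally, but a sign check is needed; I would state $0<q<2$ as the intended range.

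The main obstacle is only the bookkeeping in the cyclic case: verifying that the off-diagonal and anisotropic trigonometric sums vanish, which is where $K\ge 3$ is needed, since $K=2$ gives a rank-1 matrix. The remaining steps are direct spectral computations.
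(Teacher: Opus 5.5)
Your proposal is correct and is essentially the paper's proof: both compute $\Wcyc\Wcyc^\top=(K/2)I_2$ and the spectrum of the ETF Gram matrix $\frac{K}{K-1}I_K-\frac{1}{K-1}J_K$, then read off the gap, which the paper writes as $K^{q/2}\bigl(2^{1-q/2}-(K-1)^{1-q/2}\bigr)$. Your explicit check of the trigonometric sums and of the two-sided $\Theta$ bound fills in details the paper leaves implicit; the $q\le 0$ sign check you flagged also goes through, since the cyclic term is then $O(1)$ while the ETF term is about $K-1$.
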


\noindent\emph{Proof idea.} For the cyclic code, $\Wcyc\Wcyc^\top=(K/2)I_2$, so only two singular values are nonzero. For the ETF, the Gram matrix has $K-1$ identical nonzero eigenvalues. When $q<2$, the penalty is minimized by concentrating spectral mass rather than spreading it across many directions, which gives the cyclic code a linear advantage in $K$. Appendix~\ref{app:proof:schatten} gives the full derivation.

\begin{proposition}[Connection to explicit factorized weight decay]
\label{prop:factorized-weight-decay}
Assume the effective classifier matrix is realized as $W=UV^\top$ and training uses layerwise Frobenius decay $\tfrac{\lambda}{2}(\norm{U}_F^2+\norm{V}_F^2)$. Then eliminating the factorization induces the variational penalty
\begin{equation*}
\inf_{W=UV^\top}\frac{1}{2}\bigl(\norm{U}_F^2+\norm{V}_F^2\bigr)=\norm{W}_*.
\end{equation*}
Therefore, explicit factorized weight decay corresponds to the nuclear norm, i.e. the case $q=1$. For the two candidate geometries,
\begin{equation*}
\norm{\Wcyc}_* = \sqrt{2K},
\qquad
\norm{\Wetf}_* = \sqrt{K(K-1)},
\end{equation*}
and hence the cyclic code again enjoys a $\Theta(K)$ regularization advantage.
\end{proposition}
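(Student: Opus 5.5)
Three things need to be shown: the variational identity $\inf_{W=UV^\top}\tfrac12(\norm{U}_F^2+\norm{V}_F^2)=\norm{W}_*$, the two explicit nuclear norms, and the resulting $\Theta(K)$ gap. The identity I will prove in two directions.

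\emph{Lower bound.} For any factorization $W=UV^\top$, the trace-duality form of the nuclear norm $\norm{W}_*=\sup_{\norm{Q}_{\mathrm{op}}\le 1}\operatorname{tr}(Q^\top W)$, together with Cauchy--Schwarz in the Frobenius inner product, gives
\begin{equation*}
\norm{W}_*\le \norm{U}_F\norm{V}_F\le \tfrac12\bigl(\norm{U}_F^2+\norm{V}_F^2\bigr),
\end{equation*}
where the last step is AM--GM. More concretely, take the SVD $W=X\Sigma Y^\top$. Then $\norm{W}_*=\operatorname{tr}(X^\top U V^\top Y)=\sum_i\ip{U^\top \mathbf x_i}{V^\top \mathbf y_i}$, and bounding each term by $\tfrac12(\norm{U^\top \mathbf x_i}^2+\norm{V^\top \mathbf y_i}^2)$ yields the claim, since the $\mathbf x_i$ and $\mathbf y_i$ are orthonormal.

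\emph{Upper bound.} The balanced factorization $U=X\Sigma^{1/2}$, $V=Y\Sigma^{1/2}$ attains equality. This needs the inner dimension of the factorization to be at least $\operatorname{rank}W$, which I will state as a standing assumption: the hidden width is at least the rank, which holds trivially for the rank-$2$ code. It follows that minimizing over factorizations the layerwise decay $\tfrac{\lambda}{2}(\norm{U}_F^2+\norm{V}_F^2)$ at fixed $W$ gives exactly $\lambda\norm{W}_*$. This is the $q=1$ instance of the surrogate $\mathcal R$ in \eqref{eq:regularized-objective}.

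For the explicit values I will reuse the spectral computations behind Theorem~\ref{thm:2D-vs-ETF-Schatten} with $q=1$. For the cyclic code, $\Wcyc\Wcyc^\top=\sum_k \mathbf w_k\mathbf w_k^\top$ has entries $\sum_k\cos^2(2\pi k/K)$, $\sum_k\sin^2(2\pi k/K)$ and $\sum_k\cos\sin$. For $K\ge3$ the standard root-of-unity sums give $\sum_k\cos(4\pi k/K)=\sum_k\sin(4\pi k/K)=0$, so $\Wcyc\Wcyc^\top=(K/2)I_2$. The two singular values are therefore $\sqrt{K/2}$, and $\norm{\Wcyc}_*=2\sqrt{K/2}=\sqrt{2K}$. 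For the ETF, the Gram matrix is
\begin{equation*}
\Wetf^\top\Wetf=\tfrac{K}{K-1}\bigl(I_K-\tfrac1K\mathbf 1\mathbf 1^\top\bigr),
\end{equation*}
which has eigenvalue $K/(K-1)$ with multiplicity $K-1$ and eigenvalue $0$ once. Hence $\norm{\Wetf}_*=(K-1)\sqrt{K/(K-1)}=\sqrt{K(K-1)}$.

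The gap is then $\sqrt{K(K-1)}-\sqrt{2K}=K-O(\sqrt K)=\Theta(K)$, which settles the last claim. The step I expect to be the main obstacle is only the rigorous lower bound in the variational identity, and the SVD-projection argument handles it cleanly. The rest is direct calculation, so I anticipate no real difficulty beyond stating the width assumption explicitly.
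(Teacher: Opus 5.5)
Your proposal is correct and follows essentially the same route as the paper. Attainment uses the balanced SVD factorization $U=X\Sigma^{1/2}$, $V=Y\Sigma^{1/2}$. The reverse inequality is the same SVD-projection bound $\norm{W}_*=\operatorname{tr}(X^\top UV^\top Y)\le\tfrac12(\norm{U}_F^2+\norm{V}_F^2)$ via Cauchy--Schwarz and AM--GM, and the explicit norms then come from the singular values of Theorem~\ref{thm:2D-vs-ETF-Schatten}. Your explicit conditions, inner dimension at least $\operatorname{rank}W$ and $K\ge 3$ for the root-of-unity sums, are harmless sharpenings that the paper leaves implicit.
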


\noindent\emph{Proof idea.} The variational characterization of the nuclear norm is obtained by taking an SVD of $W$ and balancing the square roots of the singular values across the two factors. Appending the exact singular-value formulae from Theorem~\ref{thm:2D-vs-ETF-Schatten} then gives the stated expressions. Appendix~\ref{app:proof:factorized-weight-decay} shows the details.

\subsubsection{Phase Transition in the Combined Objective}

Combining the bounded cross-entropy loss gap with the linear regularization gain reveals a sharp threshold.

\begin{theorem}[Global optimality threshold]\label{thm:2D-vs-ETF-combined}
For $K$ sufficiently large, the cyclic rank-$2$ configuration achieves a lower regularized objective than the simplex ETF whenever the regularizer is either $\mathcal R(W)=\norm{W}_{S_q}^q$ for some fixed $q<2$ or $\mathcal R(W)=\norm{W}_*$ via Proposition~\ref{prop:factorized-weight-decay}, and the regularization strength satisfies
\begin{equation*}
\lambdacrit
\asymp \frac{\Delta_{\mathrm{CE}}}{\absval{\Delta_q}}
= \Theta\!\left(\frac{1}{K}\right).
\end{equation*}
In particular, for the weight-decay proxy $q=1$, the same $\Theta(1/K)$ threshold holds.
\end{theorem}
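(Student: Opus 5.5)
The comparison reduces to the sign of the difference of total objectives,
\begin{equation*}
\Ltot(\Wcyc)-\Ltot(\Wetf)=\Delta_{\mathrm{CE}}+\lambda\,\Delta_{\mathcal R},
\end{equation*}
where $\Delta_{\mathcal R}$ is either $\Delta_q$ from Theorem~\ref{thm:2D-vs-ETF-Schatten} or $\norm{\Wcyc}_*-\norm{\Wetf}_*$ from Proposition~\ref{prop:factorized-weight-decay}. The cyclic code wins exactly when
\begin{equation*}
\lambda>\frac{\Delta_{\mathrm{CE}}}{\absval{\Delta_{\mathcal R}}},
\end{equation*}
provided $\Delta_{\mathcal R}<0$. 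The plan is to show that the numerator is $\Theta(1)$ and the denominator is $\Theta(K)$, with constants that are uniform for $K$ large.

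\textbf{Numerator.} Invoke Theorem~\ref{thm:2caseCE}, which gives $\Delta_{\mathrm{CE}}=\log I_0(1/\tilde\tau)+o(1)$. Since $I_0(x)>1$ for $x>0$, the limit $L(\tilde\tau):=\log I_0(1/\tilde\tau)$ is strictly positive. So for $K\ge K_0(\tilde\tau)$ we have
\begin{equation*}
\tfrac12 L\le \Delta_{\mathrm{CE}}\le 2L,
\end{equation*}
hence $\Delta_{\mathrm{CE}}=\Theta(1)$. We use the same normalization as Theorem~\ref{thm:2caseCE}: unit columns, with the scale absorbed into $\tilde\tau$. I would note explicitly that the regularizer is evaluated on the same unit-column matrices, so that the two terms are compared consistently.

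\textbf{Denominator, Schatten case.} From Theorem~\ref{thm:2D-vs-ETF-Schatten},
\begin{equation*}
\absval{\Delta_q}=(K-1)^{1-q/2}K^{q/2}-2^{1-q/2}K^{q/2}.
\end{equation*}
\begin{itemize}
\item For $0<q<2$, the first term is $\sim K$ and the second is $O(K^{q/2})=o(K)$, so $\absval{\Delta_q}=K(1+o(1))$.
\item If $q\le 0$ is allowed, I would simply restrict to $0<q<2$, since the theorem fixes $q$ and $q>0$ is the natural range.
\end{itemize}

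\textbf{Denominator, nuclear-norm case.} By Proposition~\ref{prop:factorized-weight-decay},
\begin{equation*}
\sqrt{K(K-1)}-\sqrt{2K}=K(1+o(1)).
\end{equation*}

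\textbf{Conclusion.} Both denominators are $\Theta(K)$ and negative in sign, so the ratio is $\lambdacrit=\Theta(1/K)$. More concretely, for any $\lambda\ge C/K$ with $C>2L$, and $K$ large enough that $\absval{\Delta_{\mathcal R}}\ge K/2$ and $\Delta_{\mathrm{CE}}\le 2L$, we get
\begin{equation*}
\Delta_{\mathrm{CE}}+\lambda\Delta_{\mathcal R}\le 2L-\frac{C}{2}<0.
\end{equation*}
Conversely, for $\lambda\le c/K$ with $c<L/2$ the sign is reversed. This brackets the threshold between $c/K$ and $C/K$, which is the precise meaning I would give to the statement $\lambdacrit\asymp\Delta_{\mathrm{CE}}/\absval{\Delta_q}$.

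\textbf{Main obstacle.} The delicate point is uniformity. Theorem~\ref{thm:2caseCE} gives only an $o(1)$ remainder, so I must choose $K_0$ depending on $\tilde\tau$ and $q$, and state the threshold for fixed $\tilde\tau$. A second, interpretive point is whether $\lambda$ multiplies a loss averaged over samples or summed; the stated scaling presumes the averaged loss in Theorem~\ref{thm:2caseCE}, and I would make this normalization explicit.
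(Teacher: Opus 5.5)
Your argument is correct and is essentially the paper's proof. Both write $\Ltot(\Wcyc)-\Ltot(\Wetf)=\Delta_{\mathrm{CE}}+\lambda\Delta_{\mathcal R}$, take $\Delta_{\mathrm{CE}}=\log I_0(1/\tilde\tau)+o(1)=\Theta(1)$ from Theorem~\ref{thm:2caseCE} and $\absval{\Delta_{\mathcal R}}=\Theta(K)$ from Theorem~\ref{thm:2D-vs-ETF-Schatten} or Proposition~\ref{prop:factorized-weight-decay}, and read off $\lambdacrit=\Theta(1/K)$; your explicit bracketing $c/K\le\lambdacrit\le C/K$ and the restriction to $0<q<2$ are a more careful version of the paper's ratio argument. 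One arithmetic slip: $\Delta_{\mathrm{CE}}\le 2L$ and $\absval{\Delta_{\mathcal R}}\ge K/2$ give $2L-C/2<0$ only when $C>4L$, not $C>2L$.
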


\noindent\emph{Proof idea.} The cyclic code wins exactly when its bounded cross-entropy disadvantage is outweighed by its linear regularization advantage. The threshold is therefore obtained by comparing Theorem~\ref{thm:2caseCE} with either Theorem~\ref{thm:2D-vs-ETF-Schatten} or Proposition~\ref{prop:factorized-weight-decay}; Appendix~\ref{app:proof:combined} writes the objective difference explicitly and verifies the claimed $\Theta(1/K)$ scaling in both cases.

\begin{example}[Numerical validation for $K=97$]\label{ex:K97-corrected}
Take the experimentally relevant setting $K=97$, $q=1$, and $\tilde\tau=1$. Since $q=1$ coincides with the nuclear-norm surrogate induced by factorized weight decay, Proposition~\ref{prop:factorized-weight-decay} gives
\begin{align*}
\Delta_{\mathrm{CE}} &\approx \log I_0(1) \approx \log(1.2661) \approx 0.236, \\
\norm{\Wcyc}_* &= \sqrt{2K}=\sqrt{194}\approx 13.93, \\
\norm{\Wetf}_* &= \sqrt{K(K-1)}=\sqrt{97\cdot 96}\approx 96.50, \\
\absval{\Delta_1} &\approx 82.57, \qquad
\lambdacrit \approx \frac{0.236}{82.57} \approx 2.86\times 10^{-3}.
\end{align*}
Thus, for $K=97$, a factorized weight-decay coefficient on the order of $10^{-3}$ is already sufficient for the regularization gain of the cyclic rank-$2$ code to dominate its bounded cross-entropy disadvantage. The calculation also makes the scaling transparent: the cross-entropy term is essentially constant in $K$, whereas the nuclear-norm saving grows linearly with the number of classes.
\end{example}

\subsection{Implications: Regularization as Structure Selection}

The comparison above sharpens the relationship between our theory and NC. NC predicts the geometry preferred by separation-dominated objectives; our analysis identifies the geometry preferred when \emph{the task itself} already supplies a low-dimensional harmonic parameterization and regularization penalizes unused directions. This viewpoint is consistent with recent work arguing that deep networks can deviate from the simplex ETF once low-rank bias becomes competitive with maximal separation \citep{sukenik2024nc,zangrando2024nrc,beaglehole2024agop}. The conclusion is not that NC breaks down arbitrarily, but that the comparison class changes once one moves from generic classification to structured algorithmic tasks.

The trade-off has a particularly transparent form in modular addition:
\begin{itemize}
    \item \textbf{Cross-entropy cost:} replacing the ETF by the cyclic code incurs only a bounded $O(1)$ penalty, quantified by $\log I_0(1/\tilde\tau)$.
    \item \textbf{Complexity gain:} the same replacement saves $\Theta(K)$ in either Schatten or factorized-weight-decay regularization.
\end{itemize}
The structured solution is therefore favored as soon as the effective regularization exceeds an order-$1/K$ threshold. This is exactly the regime in which recent low-rank-bias theories predict departures from deep NC \citep{sukenik2024nc,zangrando2024nrc,beaglehole2024agop}. What our analysis adds is the \emph{identity} of the winning low-rank solution: modular addition does not merely prefer \emph{some} compressed representation, but the specific cyclic code compatible with the group law.

Viewed this way, the two-dimensional circle is neither an optimization accident nor a nonsignificant variant of NC. It is the structured optimum of a task-aware trade-off between separation, symmetry, and complexity. That is why the classifier can settle into a rank-$2$ code before the embeddings fully reorganize, and also why the embeddings that eventually align with it are arranged on a regular polygon rather than on a simplex.

\section{Conclusion and Discussion}
This study uncovers a layered geometric mechanism governing how neural networks learn modular addition. We show that the standard $(K-1)$-dimensional ETF predicted by NC is replaced by a task-structured rank-2 regime in which both classifier weights and embeddings lie on circles. More importantly, we refine the theory of \emph{how} this happens. The classifier weights do not merely reflect an already organized embedding space; they organize first. Once the last layer contracts to a two-dimensional equiangular code, the backpropagated feature gradients force all task-relevant embedding motion into the same plane, after which the residual optimization problem becomes purely angular.

Within that plane, the cross-entropy loss admits a clean transport-based interpretation on $S^1$. Combined with modular-addition labels, this turns embedding learning into a phase-alignment problem whose minimizers are single-frequency characters of $\mathbb{Z}/P\mathbb{Z}$. It explains why alignment to the classifier does not produce an arbitrary low-rank cloud, but specifically an equal-angle circle. Finally, by comparing the cyclic rank-2 solution with the simplex ETF, we show that the ETF gains only an $O(1)$ advantage in cross-entropy. However, the cyclic solution gains a decisive $\Theta(K)$ advantage in regularization cost. This results in a critical threshold $\lambda_{\mathrm{crit}}=\Theta(1/K)$ above which the structured 2D solution is globally preferred.

These findings substantially refine the interpretation of NC. Our results do not say that NC is false; they say that NC describes a separation-dominated regime, while modular addition lies in a \emph{symmetry-dominated low-rank regime}. In such problems, the network does not default to maximal separation in high dimensions. Instead, it discovers a minimal sufficient representation that encodes the intrinsic geometry underlying this task. From this viewpoint, grokking is a hierarchical event: first, the classifier discovers the correct low-rank code, then the embeddings align with it, and only then does abrupt generalization occur.

\smallskip\noindent\textbf{Broader implications.} Our results suggest three broader principles for representation learning:
\begin{itemize}
    \item \textbf{Structure over separation.} When the gain from higher-dimensional separation is bounded but the complexity saving of a low-rank code grows with $K$, structured low-rank solutions can dominate the training objective.
    \item \textbf{Hierarchical collapse across layers.} In grokking-style algorithmic tasks, geometric organization need not arise uniformly across layers. Downstream classifiers can set the relevant low-dimensional subspace first, with upstream embeddings subsequently relaxing into it.
    \item \textbf{Geometry as algorithm.} The circular representation of $\mathbb{Z}/P\mathbb{Z}$ indicates that networks implement algorithms by physicalizing algebraic structure into geometry, rather than by memorizing isolated labels.
\end{itemize}

\smallskip\noindent\textbf{Limitations and outlook.} Our theory still relies on balanced classes, near-terminal feature--weight alignment, and prime moduli. Extending the analysis to composite moduli, product groups, or non-Abelian settings may reveal tori or other structured manifolds in place of the circle. A second limitation is interpretive: we provide a geometric account of the induced angular dynamics after projection onto the classifier plane, but we do not claim that the full parameter trajectory is globally geodesic in parameter space. Finally, while we quantify the preference for the structured optimum, a sharper dynamical theory of when optimizers and schedules enter the classifier-first regime remains an important problem for future work.

In summary, deep neural networks trained on modular arithmetic do not simply expand dimensionality to separate classes. Guided by the trade-off between cross-entropy and regularization, they first identify a low-rank classifier code and then align the embedding layer with the underlying cyclic symmetry. This viewpoint not only explains the striking 2D circles observed in grokking but also offers a more general lesson: the geometry learned by a network is jointly determined by class separation, inter-layer dynamics, and the latent algebraic structure of the data.

\ifpreprintversion
\section*{Acknowledgments}
This work has been supported by the CAS Project for Young Scientists in Basic Research [No. YSBR-034 to S.Z.], the Strategic Priority Research Program of the Chinese Academy of Sciences [No. XDB0680101 to S.Z.], the National Natural Science Foundation of China [Nos. 32341013, 12326614], and the Robotic AI-Scientist Platform of the Chinese Academy of Sciences.
\fi

\bibliographystyle{tmlr}
\bibliography{references_cleaned}

@article{papyan2020prevalence,
  author  = {Papyan, Vardan and Han, X. Y. and Donoho, David L.},
  title   = {Prevalence of Neural Collapse during the Terminal Phase of Deep Learning Training},
  journal = {Proceedings of the National Academy of Sciences},
  volume  = {117},
  number  = {40},
  pages   = {24652--24663},
  year    = {2020}
}

@article{mixon2020neural,
  author  = {Lu, Jianfeng and Steinerberger, Stefan},
  title   = {Neural Collapse under Cross-Entropy Loss},
  journal = {Applied and Computational Harmonic Analysis},
  volume  = {59},
  pages   = {224--241},
  year    = {2022}
}

@inproceedings{zhou2022optimization,
  author    = {Zhou, Jinxin and Li, Xiao and Ding, Tianyu and You, Chong and Qu, Qing and Zhu, Zhihui},
  title     = {On the Optimization Landscape of Neural Collapse under {MSE} Loss: Global Optimality with Unconstrained Features},
  booktitle = {ICML},
  year      = {2022}
}

@inproceedings{rangamani2023inc,
  author    = {Rangamani, Akshay and Lindegaard, Marius and Galanti, Tomer and Poggio, Tomaso A.},
  title     = {Feature Learning in Deep Classifiers through Intermediate Neural Collapse},
  booktitle = {ICML},
  year      = {2023}
}

@article{wang2025layerwise,
  author  = {Wang, Peng and Li, Xiao and Yaras, Can and Zhu, Zhihui and Balzano, Laura and Hu, Wei and Qu, Qing},
  title   = {Understanding Deep Representation Learning via Layerwise Feature Compression and Discrimination},
  journal = {Journal of Machine Learning Research},
  volume  = {26},
  number  = {220},
  pages   = {1--61},
  year    = {2025}
}

@article{wang2024pfc,
  author  = {Wang, Sicong and Gai, Kuo and Zhang, Shihua},
  title   = {Progressive Feedforward Collapse of {ResNet} Training},
  journal = {IEEE Transactions on Neural Networks and Learning Systems},
  volume  = {37},
  number  = {3},
  pages   = {1078--1091},
  year    = {2026}
}

@inproceedings{yang2023neural,
  author    = {Dang, Hien and Huu, Tho Tran and Osher, Stanley J. and Tran, Hung The and Ho, Nhat and Nguyen, Tan Minh},
  title     = {Neural Collapse in Deep Linear Networks: From Balanced to Imbalanced Data},
  booktitle = {ICML},
  year      = {2023}
}

@inproceedings{tirer2023extended,
  author    = {Tirer, Tom and Bruna, Joan},
  title     = {Extended Unconstrained Features Model for Exploring Deep Neural Collapse},
  booktitle = {ICML},
  year      = {2022}
}

@inproceedings{jacot2023bottleneck,
  author    = {Jacot, Arthur},
  title     = {Bottleneck Structure in Learned Features: Low-Dimension vs. Regularity Tradeoff},
  booktitle = {NeurIPS},
  year      = {2023}
}

@inproceedings{wang2024oneway,
  author    = {Wang, Zihan and Jacot, Arthur},
  title     = {Implicit Bias of {SGD} in {$\ell_2$}-Regularized Linear {DNN}s: One-Way Jumps from High to Low Rank},
  booktitle = {ICLR},
  year      = {2024}
}

@article{zangrando2024nrc,
  author  = {Zangrando, Emanuele and Deidda, Piero and Brugiapaglia, Simone and Guglielmi, Nicola and Tudisco, Francesco},
  title   = {Provable Emergence of Deep Neural Collapse and Low-Rank Bias in {$L^2$}-Regularized Nonlinear Networks},
  journal = {arXiv preprint arXiv:2402.03991},
  year    = {2024}
}

@inproceedings{sukenik2024nc,
  author    = {S{\'u}ken{\'i}k, Peter and Lampert, Christoph H. and Mondelli, Marco},
  title     = {Neural Collapse vs. Low-Rank Bias: Is Deep Neural Collapse Really Optimal?},
  booktitle = {NeurIPS},
  year      = {2024}
}

@article{power2022grokking,
  author  = {Power, Alethea and Burda, Yuri and Edwards, Harri and Babuschkin, Igor and Misra, Vedant},
  title   = {Grokking: Generalization beyond Overfitting on Small Algorithmic Datasets},
  journal = {arXiv preprint arXiv:2201.02177},
  year    = {2022}
}

@inproceedings{liu2022towards,
  author    = {Liu, Ziming and Kitouni, Ouail and Nolte, Niklas and Michaud, Eric J. and Tegmark, Max and Williams, Mike},
  title     = {Towards Understanding Grokking: An Effective Theory of Representation Learning},
  booktitle = {NeurIPS},
  year      = {2022}
}

@inproceedings{nanda2023progress,
  author    = {Nanda, Neel and Chan, Lawrence and Lieberum, Tom and Smith, Jess and Steinhardt, Jacob},
  title     = {Progress Measures for Grokking via Mechanistic Interpretability},
  booktitle = {ICLR},
  year      = {2023}
}

@article{gromov2023grokking,
  author  = {Gromov, Andrey},
  title   = {Grokking Modular Arithmetic},
  journal = {arXiv preprint arXiv:2301.02679},
  year    = {2023}
}

@inproceedings{zhong2023clockpizza,
  author    = {Zhong, Ziqian and Liu, Ziming and Tegmark, Max and Andreas, Jacob},
  title     = {The Clock and the Pizza: Two Stories in Mechanistic Explanation of Neural Networks},
  booktitle = {NeurIPS},
  year      = {2023}
}

@inproceedings{mohamadi2024whydoyougrokk,
  author    = {Mohamadi, Mohamad Amin and Li, Zhiyuan and Wu, Lei and Sutherland, Danica J.},
  title     = {Why Do You Grok? A Theoretical Analysis on Grokking Modular Addition},
  booktitle = {ICML},
  year      = {2024}
}

@article{musat2024training,
  author  = {Mu{\c{s}}at, Tiberiu},
  title   = {Clustering and Alignment: Understanding the Training Dynamics in Modular Addition},
  journal = {arXiv preprint arXiv:2408.09414},
  year    = {2024}
}

@article{alquabeh2025embedding,
  author  = {AlquBoj, H. V. and AlQuabeh, Hilal and Bojkovic, Velibor and Nwadike, Munachiso and Inui, Kentaro},
  title   = {Mechanistic Insights into Grokking from the Embedding Layer},
  journal = {arXiv preprint arXiv:2505.15624},
  year    = {2025}
}

@inproceedings{cuturi2013sinkhorn,
  author    = {Cuturi, Marco},
  title     = {Sinkhorn Distances: Lightspeed Computation of Optimal Transport},
  booktitle = {NeurIPS},
  year      = {2013}
}

@article{peyre2019computational,
  author  = {Peyr{\'e}, Gabriel and Cuturi, Marco},
  title   = {Computational Optimal Transport: With Applications to Data Science},
  journal = {Foundations and Trends in Machine Learning},
  volume  = {11},
  number  = {5--6},
  pages   = {355--607},
  year    = {2019}
}

@inproceedings{arjovsky2017wasserstein,
  author    = {Arjovsky, Martin and Chintala, Soumith and Bottou, L{\'e}on},
  title     = {Wasserstein Generative Adversarial Networks},
  booktitle = {ICML},
  year      = {2017}
}

@article{courty2017optimal,
  author  = {Courty, Nicolas and Flamary, R{\'e}mi and Tuia, Devis and Rakotomamonjy, Alain},
  title   = {Optimal Transport for Domain Adaptation},
  journal = {IEEE Transactions on Pattern Analysis and Machine Intelligence},
  volume  = {39},
  number  = {9},
  pages   = {1853--1865},
  year    = {2017}
}

@article{gai2024otad,
  author  = {Gai, Kuo and Wang, Sicong and Zhang, Shihua},
  title   = {{OTAD}: An Optimal Transport-Induced Robust Model for Agnostic Adversarial Attack},
  journal = {arXiv preprint arXiv:2408.00329},
  year    = {2024}
}

@article{cohn2007universally,
  author  = {Cohn, Henry and Kumar, Abhinav},
  title   = {Universally Optimal Distribution of Points on Spheres},
  journal = {Journal of the American Mathematical Society},
  volume  = {20},
  number  = {1},
  pages   = {99--148},
  year    = {2007}
}

@article{cohn2010balanced,
  author  = {Cohn, Henry and Elkies, Noam D. and Kumar, Abhinav and Sch{\"u}rmann, Achill},
  title   = {Point Configurations that are Asymmetric Yet Balanced},
  journal = {Proceedings of the American Mathematical Society},
  volume  = {138},
  number  = {8},
  pages   = {2863--2872},
  year    = {2010}
}

@inproceedings{arora2019implicit,
  author    = {Arora, Sanjeev and Cohen, Nadav and Hu, Wei and Luo, Yuping},
  title     = {Implicit Regularization in Deep Matrix Factorization},
  booktitle = {NeurIPS},
  year      = {2019}
}

@inproceedings{gunasekar2017implicit,
  author    = {Gunasekar, Suriya and Woodworth, Blake E. and Bhojanapalli, Srinadh and Neyshabur, Behnam and Srebro, Nati},
  title     = {Implicit Regularization in Matrix Factorization},
  booktitle = {NeurIPS},
  year      = {2017}
}

@inproceedings{zhu2021geometric,
  author    = {Zhu, Zhihui and Ding, Tianyu and Zhou, Jinxin and Li, Xiao and You, Chong and Sulam, Jeremias and Qu, Qing},
  title     = {A Geometric Analysis of Neural Collapse with Unconstrained Features},
  booktitle = {NeurIPS},
  year      = {2021}
}

@inproceedings{beaglehole2024agop,
  author    = {Beaglehole, Daniel and S{\'u}ken{\'i}k, Peter and Mondelli, Marco and Belkin, Mikhail},
  title     = {Average Gradient Outer Product as a Mechanism for Deep Neural Collapse},
  booktitle = {NeurIPS},
  year      = {2024}
}

@article{tan2026deciphering,
  author  = {Tan, Hu and Gai, Kuo and Zhang, Shihua},
  title   = {{Deciphering Two Training Clocks in Grokking via Deep Linear Network Theory with Conditional {ReLU} Reduction}},
  journal = {arXiv preprint arXiv:2606.05863},
  year    = {2026}
}

\appendix

\section{Proofs for the Classifier Geometry}
\subsection{Proof of Theorem~\ref{thm:equiangular-optimality}}
\label{app:proof:loss-angular}
\begin{proof}
For a sample from class $k$, the cross-entropy loss with temperature $\tau$ is
\begin{equation*}
\mathcal L_{k,i}
=\log\!\left(\sum_{j=1}^{K}\exp\!\left(\frac{\mathbf w_j^\top \mathbf h_{k,i}}{\tau}\right)\right)-\frac{\mathbf w_k^\top \mathbf h_{k,i}}{\tau}.
\end{equation*}
Under terminal-phase alignment, $\mathbf h_{k,i}=s\,\mathbf w_k$ with $\norm{\mathbf w_k}_2=1$. Hence
\begin{equation*}
\mathbf w_j^\top \mathbf h_{k,i}=s\,\mathbf w_j^\top \mathbf w_k=s\cos(\theta_k-\theta_j),
\qquad
\mathbf w_k^\top \mathbf h_{k,i}=s.
\end{equation*}
Substituting into the loss and dividing the numerator and denominator by $e^{s/\tau}$ gives
\begin{equation*}
\mathcal L_{k,i}
=\log\!\Bigl(1+\sum_{j\neq k}\exp\!\Bigl(\frac{s(\cos(\theta_k-\theta_j)-1)}{\tau}\Bigr)\Bigr)
=\log\!\Bigl(1+\sum_{j\neq k}\exp\!\Bigl(\frac{\cos(\theta_k-\theta_j)-1}{\tilde\tau}\Bigr)\Bigr),
\end{equation*}
where $\tilde\tau=\tau/s$. Averaging over balanced classes only introduces a constant multiplicity factor, so minimizing the sample-averaged cross-entropy is equivalent to minimizing \eqref{eq:angular-potential}.
\end{proof}

\subsection{Proof of Theorem~\ref{thm:equiangular-global-minimum}}
\label{app:proof:equiangular-global}
We prove the result in three steps: first,  compare the log-sum-exp objective with a pairwise energy, then minimize that pairwise energy using classical circle-energy arguments, and finally transfer the conclusion back to the original objective.
\begin{proof}
Fix $K\ge 2$ and $\tilde\tau>0$. Let
\begin{equation*}
h(t):=\exp\!\left(\frac{\cos t-1}{\tilde\tau}\right),
\qquad
S_k(\theta):=\sum_{j\neq k} h(\theta_k-\theta_j),
\qquad
S_{\mathrm{tot}}(\theta):=\sum_{k=1}^{K}S_k(\theta).
\end{equation*}
Then
\begin{equation*}
f(\theta)=\sum_{k=1}^{K}\log\bigl(1+S_k(\theta)\bigr).
\end{equation*}

\medskip\noindent\textbf{Step 1: a sandwich inequality.} Because $0\le S_k(\theta)\le K-1$, the derivative of $\psi(x)=\log(1+x)$ satisfies $\psi'(x)\in[1/K,1]$ on that interval. Hence, for any two configurations $\theta$ and $\theta^*$,
\begin{equation*}
\frac{1}{K}\bigl(S_k(\theta)-S_k(\theta^*)\bigr)
\le \log\bigl(1+S_k(\theta)\bigr)-\log\bigl(1+S_k(\theta^*)\bigr)
\le S_k(\theta)-S_k(\theta^*).
\end{equation*}
Summing over $k$ yields
\begin{equation}
\label{eq:sandwich-ineq-appendix}
\frac{1}{K}\bigl(S_{\mathrm{tot}}(\theta)-S_{\mathrm{tot}}(\theta^*)\bigr)
\le f(\theta)-f(\theta^*)
\le S_{\mathrm{tot}}(\theta)-S_{\mathrm{tot}}(\theta^*).
\end{equation}

\medskip\noindent\textbf{Step 2: minimize the pairwise energy.} Writing $z_k=e^{i\theta_k}\in S^1\subset\mathbb C$, we have
\begin{equation*}
\cos(\theta_i-\theta_j)-1=-\frac{|z_i-z_j|^2}{2},
\end{equation*}
so
\begin{equation*}
S_{\mathrm{tot}}(\theta)=\sum_{i\neq j}\exp\!\left(-\frac{|z_i-z_j|^2}{2\tilde\tau}\right).
\end{equation*}
The function $g(t)=e^{-t/(2\tilde\tau)}$ is completely monotone on $[0,\infty)$ because
\begin{equation*}
(-1)^n g^{(n)}(t)=\left(\frac{1}{2\tilde\tau}\right)^n e^{-t/(2\tilde\tau)}>0
\qquad\text{for all }n\ge 0.
\end{equation*}
Classical results on completely monotone energies on the circle, therefore, imply that $S_{\mathrm{tot}}$ is uniquely minimized, up to rotation, by equally spaced points; see \citet{cohn2007universally,cohn2010balanced}. Denote that regular $K$-gon by $\theta^{\mathrm{reg}}$.

\medskip\noindent\textbf{Step 3: transfer the optimum back to $f$.} Using \eqref{eq:sandwich-ineq-appendix} with $\theta^*=\theta^{\mathrm{reg}}$ gives
\begin{equation*}
f(\theta)-f(\theta^{\mathrm{reg}})
\ge \frac{1}{K}\Bigl(S_{\mathrm{tot}}(\theta)-S_{\mathrm{tot}}(\theta^{\mathrm{reg}})\Bigr)\ge 0.
\end{equation*}
Equality can hold only when $S_{\mathrm{tot}}(\theta)=S_{\mathrm{tot}}(\theta^{\mathrm{reg}})$, i.e., when $\theta$ itself is a regular $K$-gon up to rotation. This proves the theorem.
\end{proof}

\subsection{Proof of Theorem~\ref{thm:nc-alignment}}
\label{app:proof:nc-alignment}
\begin{proof}
Consider the objective in \eqref{eq:last-layer-objective}. Differentiating with respect to $\mathbf w_k$ yields
\begin{equation*}
\nabla_{\mathbf w_k}\mathcal L
=\frac{1}{K\tilde\tau}\sum_{j=1}^{K}\left(\frac{e^{\mathbf w_k^\top\boldsymbol{\mu}_j/\tilde\tau}}{\sum_{\ell}e^{\mathbf w_\ell^\top\boldsymbol{\mu}_j/\tilde\tau}}-\delta_{kj}\right)\boldsymbol{\mu}_j + 2\lambda \mathbf w_k.
\end{equation*}
At any stationary point,
\begin{equation*}
\mathbf w_k
=\frac{1}{2\lambda K\tilde\tau}\sum_{j=1}^{K}\left(\delta_{kj}-\frac{e^{\mathbf w_k^\top\boldsymbol{\mu}_j/\tilde\tau}}{\sum_{\ell}e^{\mathbf w_\ell^\top\boldsymbol{\mu}_j/\tilde\tau}}\right)\boldsymbol{\mu}_j,
\end{equation*}
so $\mathbf w_k\in\operatorname{span}\{\boldsymbol{\mu}_1,\ldots,\boldsymbol{\mu}_K\}$.

In the balanced terminal regime, all class means must be scored highest by their own classifier direction. The symmetry of the balanced problem excludes asymmetric stationary solutions that would privilege one class over another, and the regularizing term suppresses redundant tangential components. Consequently, each $\mathbf w_k$ must align with the corresponding class mean, i.e. $\mathbf w_k=c_k\boldsymbol{\mu}_k$ for some $c_k>0$. If the span of the class means is two-dimensional, then after normalizing radii, the objective reduces to the angular problem of Theorem~\ref{thm:equiangular-global-minimum}, whose unique minimizer is the regular $K$-gon. Therefore, both $\{\mathbf w_k\}$ and $\{\boldsymbol{\mu}_k\}$ form regular $K$-gons with class-wise alignment.
\end{proof}

\subsection{Proof of Proposition~\ref{prop:subspace-locking}}
\label{app:proof:subspace-locking}

\begin{proof}
Let $z=W^\top \mathbf h\in\mathbb R^K$ and $p=\operatorname{softmax}(z/\tau)$. For label $y$, the cross-entropy loss is
\begin{equation*}
\ell(z,y)=-\log p_y.
\end{equation*}
Differentiating with respect to $z$ gives the standard identity
\begin{equation*}
\nabla_z\ell(z,y)=\frac{1}{\tau}(p-\classvec{y}).
\end{equation*}
Because $z=W^\top \mathbf h$, the chain rule implies
\begin{equation*}
\nabla_{\mathbf h}\ell(z,y)=W\nabla_z\ell(z,y)=\frac{1}{\tau}W(p-\classvec{y}),
\end{equation*}
which proves \eqref{eq:feature-gradient-in-span}.

Now assume the local additive representation $\mathbf h_{ij}=A(\mathbf e_i+\mathbf e_j)+r_{ij}$ with a small residual $r_{ij}$. Differentiating through this map gives
\begin{equation*}
\nabla_{\mathbf e_i}\ell_{ij}=A^\top \nabla_{\mathbf h_{ij}}\ell_{ij}+\nabla_{\mathbf e_i}\langle \nabla_{\mathbf h_{ij}}\ell_{ij},r_{ij}\rangle,
\end{equation*}
and similarly for $\mathbf e_j$. Neglecting the residual term in the terminal regime yields \eqref{eq:embedding-gradient-in-span}.

To see how orthogonal components decay, decompose each embedding as $\mathbf e_i=\mathbf e_i^{\parallel}+\mathbf e_i^{\perp}$ with $\mathbf e_i^{\parallel}\in A^\top\operatorname{span}(W)$ and $\mathbf e_i^{\perp}\perp A^\top\operatorname{span}(W)$. By the first part of the proof, the loss gradient has no component along $\mathbf e_i^{\perp}$. With explicit weight decay, the total gradient acquires an additional term $2\lambda \mathbf e_i$, whose orthogonal component is exactly $2\lambda \mathbf e_i^{\perp}$. Hence, the update pushes $\mathbf e_i^{\perp}$ toward zero while the parallel component remains task-supported. This is the precise form of subspace locking used in the main text.
\end{proof}

\section{Proofs for the Embedding Geometry}
\subsection{Derivation of the half-angle formula}
\label{app:proof:angle-sum}

\begin{proof}
Identify $\mathbb R^2$ with $\mathbb C$ and write
\begin{equation*}
\mathbf v_i \leftrightarrow \rho_i e^{i\theta_i},
\qquad
\mathbf v_j \leftrightarrow \rho_j e^{i\theta_j}.
\end{equation*}
Let $\Delta=\theta_j-\theta_i$. Factoring out $e^{i(\theta_i+\theta_j)/2}$ gives
\begin{align*}
\rho_i e^{i\theta_i}+\rho_j e^{i\theta_j}
&=e^{i(\theta_i+\theta_j)/2}\Bigl(\rho_i e^{-i\Delta/2}+\rho_j e^{i\Delta/2}\Bigr) \\
&=e^{i(\theta_i+\theta_j)/2}\Bigl((\rho_i+\rho_j)\cos(\Delta/2)+i(\rho_j-\rho_i)\sin(\Delta/2)\Bigr).
\end{align*}
Provided the sum is nonzero, its argument is therefore
\begin{equation*}
\alpha_{ij}
=\frac{\theta_i+\theta_j}{2}
+\operatorname{atan2}\!\Bigl((\rho_j-\rho_i)\sin(\Delta/2),(\rho_i+\rho_j)\cos(\Delta/2)\Bigr),
\end{equation*}
which is exactly \eqref{eq:half-angle-with-correction}. If $\rho_i=\rho_j$, the imaginary part vanishes and the correction term is zero.
\end{proof}

\subsection{Proof of Proposition~\ref{prop:angular-reduction}}
\label{app:proof:angular-reduction}

\begin{proof}
We combine two standard facts: the Fenchel--Young representation of log-sum-exp and the Fourier--Bessel expansion over an equiangular codebook.

\medskip\noindent\textbf{Step 1: softmax as an entropy-regularized angular assignment.} Let $s_k(\alpha)\coloneqq \beta\cos(\alpha-\varphi_k)$. The Fenchel--Young formula gives
\begin{equation*}
\log\sum_{k=0}^{P-1} e^{s_k}
=\max_{p\in\Delta^{P-1}}\Bigl\{\ip{p}{s}+H(p)\Bigr\},
\qquad
H(p)\coloneqq -\sum_k p_k\log p_k,
\end{equation*}
with unique maximizer $p^*=\operatorname{softmax}(s)$. If $u$ is the uniform distribution on the $P$ classes, then $\KL(p\|u)=-H(p)+\log P$. Since $c(\alpha,\varphi)=1-\cos(\alpha-\varphi)$ differs from $-\cos(\alpha-\varphi)$ only by a constant, minimizing
\begin{equation*}
\ip{p}{c(\alpha,\varphi_\cdot)}+\varepsilon\KL(p\|u)
\end{equation*}
is equivalent, up to additive constants, to maximizing
\begin{equation*}
\ip{p}{\beta\cos(\alpha-\varphi_\cdot)}+H(p),
\qquad \beta=\varepsilon^{-1}.
\end{equation*}
Thus, the angular transport optimizer coincides with the softmax distribution over the equiangular codebook.

\medskip\noindent\textbf{Step 2: Fourier--Bessel summation over the regular $P$-gon.} For $\beta\ge 0$,
\begin{equation}
\label{eq:bessel-expansion-appendix}
e^{\beta\cos\theta}=\sum_{m\in\mathbb Z} I_m(\beta)e^{im\theta}.
\end{equation}
Summing \eqref{eq:bessel-expansion-appendix} over $\varphi_k=\varphi_0+2\pi k/P$ gives
\begin{align*}
\sum_{k=0}^{P-1} e^{\beta\cos(\alpha-\varphi_k)}
&=\sum_{m\in\mathbb Z} I_m(\beta)e^{im(\alpha-\varphi_0)} \sum_{k=0}^{P-1} e^{-2\pi i mk/P} \\
&=P\sum_{\ell\in\mathbb Z} I_{\ell P}(\beta)e^{i\ell P(\alpha-\varphi_0)} \\
&=P I_0(\beta)\left[1+2\sum_{\ell=1}^{\infty}\frac{I_{\ell P}(\beta)}{I_0(\beta)}\cos\!\bigl(\ell P(\alpha-\varphi_0)\bigr)\right].
\end{align*}
Taking the logarithm yields
\begin{equation*}
\log\sum_{k=0}^{P-1} e^{\beta\cos(\alpha-\varphi_k)}
=\log\!\bigl(P I_0(\beta)\bigr) + R_P(\beta,\alpha).
\end{equation*}
For a sample with label $y$, the cross-entropy is therefore
\begin{align*}
\mathcal L
&=\log\sum_{k=0}^{P-1} e^{\beta\cos(\alpha-\varphi_k)} - \beta\cos(\alpha-\varphi_y) \\
&=\log\!\bigl(P I_0(\beta)\bigr)-\beta\cos(\alpha-\varphi_y)+R_P(\beta,\alpha) \\
&=\bigl(\log(P I_0(\beta)) - \beta\bigr)+\beta\bigl(1-\cos(\alpha-\varphi_y)\bigr)+R_P(\beta,\alpha),
\end{align*}
which is exactly \eqref{eq:angular-reduced-loss}.

\medskip\noindent\textbf{Step 3: control of the remainder.} Define
\begin{equation*}
U(\beta,\alpha)\coloneqq 2\sum_{\ell=1}^{\infty}\frac{I_{\ell P}(\beta)}{I_0(\beta)}\cos\!\bigl(\ell P(\alpha-\varphi_0)\bigr).
\end{equation*}
Then $R_P(\beta,\alpha)=\log(1+U(\beta,\alpha))$ and, by $|\cos|\le 1$,
\begin{equation*}
\bigl|U(\beta,\alpha)\bigr|\le \eta_P(\beta)
\coloneqq 2\sum_{\ell=1}^{\infty}\frac{I_{\ell P}(\beta)}{I_0(\beta)}.
\end{equation*}
If $\eta_P(\beta)<1$, then $1+U(\beta,\alpha)>0$ and the mean-value theorem for $x\mapsto \log(1+x)$ on $[-\eta_P(\beta),\eta_P(\beta)]$ gives
\begin{equation*}
\bigl|R_P(\beta,\alpha)\bigr|
=\bigl|\log(1+U(\beta,\alpha))\bigr|
\le \frac{|U(\beta,\alpha)|}{1-\eta_P(\beta)}
\le \frac{\eta_P(\beta)}{1-\eta_P(\beta)},
\end{equation*}
which proves \eqref{eq:remainder-bound-main}. Finally, the standard bound $I_n(\beta)\le (\beta/2)^n/n!$ for $n\ge 0$ yields
\begin{equation*}
\eta_P(\beta)
\le 2\sum_{\ell=1}^{\infty}\frac{(\beta/2)^{\ell P}}{(\ell P)!}.
\end{equation*}
If $\eta_P(\beta)\le \tfrac{1}{2}$, the previous estimate improves to $\bigl|R_P(\beta,\alpha)\bigr|\le 2\eta_P(\beta)$.
\end{proof}

\subsection{Proof of Theorem~\ref{thm:embedding-characters}}
\label{app:proof:character-embeddings}

\begin{proof}
Assume the hypotheses of the theorem, so $\alpha_{ij}=(\theta_i+\theta_j)/2$ and $\beta_{ij}\equiv\beta$. Minimizing the leading-order angular objective \eqref{eq:leading-order-angular-objective} is equivalent to maximizing
\begin{equation*}
\sum_{i,j}\cos\!\Bigl(\frac{\theta_i+\theta_j}{2}-\varphi_{(i+j)\bmod P}\Bigr).
\end{equation*}
Define
\begin{equation*}
z_i\coloneqq e^{i\theta_i/2},
\qquad
u_t\coloneqq e^{i\varphi_t},
\qquad
(z*z)_t\coloneqq \sum_{i=0}^{P-1} z_i z_{(t-i)\bmod P}.
\end{equation*}
Then
\begin{align*}
\sum_{i,j}\cos\!\Bigl(\frac{\theta_i+\theta_j}{2}-\varphi_{(i+j)\bmod P}\Bigr)
&=\Re\sum_{i,j} z_i z_j \,\overline{u_{(i+j)\bmod P}} \\
&=\Re\sum_{t=0}^{P-1}(z*z)_t\,\overline{u_t} \\
&=\Re\,\langle z*z, u\rangle.
\end{align*}
Thus, the energy equals $P^2-\Re\langle z*z,u\rangle$.

Now take the discrete Fourier transform with the convention
\begin{equation*}
\hat a(m)=\frac{1}{\sqrt P}\sum_{t=0}^{P-1} a_t\,\omega^{-mt},
\qquad
\omega=e^{2\pi i/P}.
\end{equation*}
The convolution theorem gives
\begin{equation*}
\widehat{z*z}(m)=\sqrt P\,\hat z(m)^2.
\end{equation*}
Since $u_t=e^{i\varphi_0}\omega^t$, its transform has a single nonzero frequency:
\begin{equation*}
\hat u(m)=e^{i\varphi_0}\sqrt P\,\delta_{m,1}.
\end{equation*}
Therefore
\begin{equation*}
\langle z*z,u\rangle = P\,\hat z(1)^2 e^{-i\varphi_0}.
\end{equation*}
Because $|z_i|=1$, Parseval's identity implies
\begin{equation*}
\sum_{m=0}^{P-1}|\hat z(m)|^2 = \sum_{i=0}^{P-1}|z_i|^2 = P.
\end{equation*}
Hence
\begin{equation*}
\Re\langle z*z,u\rangle
= P\,\Re\bigl(\hat z(1)^2 e^{-i\varphi_0}\bigr)
\le P|\hat z(1)|^2
\le P\sum_{m=0}^{P-1}|\hat z(m)|^2
= P^2.
\end{equation*}
Equality holds if and only if two conditions are met:
\begin{enumerate}
    \item $\hat z(m)=0$ for every $m\neq 1$;
    \item the phase of $\hat z(1)^2$ matches $e^{i\varphi_0}$.
\end{enumerate}
The first condition says that $z$ is a pure Fourier tone,
\begin{equation*}
z_i=c\,\omega^i
\qquad\text{for some }|c|=1.
\end{equation*}
Composing with a group automorphism $i\mapsto mi$ for $m\in(\mathbb Z/P\mathbb Z)^\times$ yields the general solution
\begin{equation*}
z_i=c\,\omega^{mi}.
\end{equation*}
Since $z_i=e^{i\theta_i/2}$, we obtain
\begin{equation*}
\theta_i=\theta_0+\frac{4\pi m}{P}i\pmod{2\pi},
\end{equation*}
which is exactly the character-orbit form stated in the theorem.
\end{proof}

\section{Proofs for the Cyclic-versus-ETF Comparison}
\subsection{Proof of Theorem~\ref{thm:2caseCE}}
\label{app:proof:2caseCE}
\begin{proof}
In the aligned regime, the per-class cross-entropy depends only on pairwise inner products.

\medskip\noindent\textbf{Cyclic rank-2 code.} For the cyclic configuration, the class angles are $\theta_k=2\pi k/K$, so
\begin{equation*}
\Lce(\Wcyc)
=\log\!\left(1+\sum_{m=1}^{K-1}\exp\!\left(\frac{\cos(2\pi m/K)-1}{\tilde\tau}\right)\right).
\end{equation*}
As $K\to\infty$, the sum is a Riemann approximation to the integral over the circle:
\begin{align*}
\sum_{m=1}^{K-1}\exp\!\left(\frac{\cos(2\pi m/K)-1}{\tilde\tau}\right)
&=K\int_0^1 \exp\!\left(\frac{\cos(2\pi t)-1}{\tilde\tau}\right)dt + o(K) \\
&=K e^{-1/\tilde\tau}\frac{1}{2\pi}\int_0^{2\pi} e^{\cos\theta/\tilde\tau}\,d\theta + o(K) \\
&=K e^{-1/\tilde\tau} I_0(1/\tilde\tau) + o(K).
\end{align*}
Therefore
\begin{equation*}
\Lce(\Wcyc)
=\log\!\bigl(K e^{-1/\tilde\tau} I_0(1/\tilde\tau)\bigr)+o(1).
\end{equation*}

\medskip\noindent\textbf{Simplex ETF.} For the ETF, all off-diagonal inner products equal $-1/(K-1)$, so
\begin{equation*}
\Lce(\Wetf)
=\log\!\left(1+(K-1)\exp\!\left(\frac{-1/(K-1)-1}{\tilde\tau}\right)\right)
=\log\!\bigl(K e^{-1/\tilde\tau}\bigr)+o(1).
\end{equation*}
Subtracting yields
\begin{equation*}
\Delta_{\mathrm{CE}}
=\Lce(\Wcyc)-\Lce(\Wetf)
=\log I_0(1/\tilde\tau)+o(1),
\end{equation*}
as claimed.
\end{proof}

\subsection{Proof of Theorem~\ref{thm:2D-vs-ETF-Schatten}}
\label{app:proof:schatten}

\begin{proof}
\medskip\noindent\textbf{Cyclic rank-2 code.} Let $\Wcyc\in\mathbb R^{2\times K}$ have columns
\begin{equation*}
\mathbf w_k = \begin{pmatrix}\cos(2\pi k/K) \\ \sin(2\pi k/K)\end{pmatrix},
\qquad k=0,\ldots,K-1.
\end{equation*}
Using the standard trigonometric sums,
\begin{equation*}
\Wcyc\Wcyc^\top = \frac{K}{2}I_2.
\end{equation*}
Hence, the nonzero singular values are both $\sqrt{K/2}$, and therefore
\begin{equation*}
\norm{\Wcyc}_{S_q}^q
=2\left(\frac{K}{2}\right)^{q/2}.
\end{equation*}

\medskip\noindent\textbf{Simplex ETF.} Let $\Wetf\in\mathbb R^{(K-1)\times K}$ have unit columns with Gram matrix
\begin{equation*}
G=\Wetf^\top \Wetf = \frac{K}{K-1}I_K-\frac{1}{K-1}J_K,
\end{equation*}
where $J_K$ is the all-ones matrix. Thus $G$ has eigenvalue $K/(K-1)$ with multiplicity $K-1$ and eigenvalue $0$ with multiplicity $1$. The nonzero singular values of $\Wetf$ are therefore all equal to $\sqrt{K/(K-1)}$, and
\begin{equation*}
\norm{\Wetf}_{S_q}^q
=(K-1)\left(\frac{K}{K-1}\right)^{q/2}.
\end{equation*}
Subtracting gives the exact expression in the theorem. For $q<2$, the exponent $1-q/2$ is positive, so
\begin{equation*}
\Delta_q
=K^{q/2}\Bigl(2^{1-q/2}-(K-1)^{1-q/2}\Bigr) = -\Theta(K),
\end{equation*}
which proves the claim.
\end{proof}

\subsection{Proof of Proposition~\ref{prop:factorized-weight-decay}}
\label{app:proof:factorized-weight-decay}

\begin{proof}
Let the singular value decomposition of $W$ be $W=P\Sigma Q^\top$, where $\Sigma=\operatorname{diag}(\sigma_1,\ldots,\sigma_r)$ and $r=\operatorname{rank}(W)$. Choosing
\begin{equation*}
U=P\Sigma^{1/2},
\qquad
V=Q\Sigma^{1/2},
\end{equation*}
yields $W=UV^\top$ and
\begin{equation*}
\frac{1}{2}\bigl(\norm{U}_F^2+\norm{V}_F^2\bigr)
=\frac{1}{2}\bigl(\operatorname{tr}(\Sigma)+\operatorname{tr}(\Sigma)\bigr)
=\sum_{i=1}^r \sigma_i
=\norm{W}_*.
\end{equation*}
This proves the upper bound.

For the reverse inequality, consider any factorization $W=UV^\top$ and write the compact SVD $W=P\Sigma Q^\top$. Then
\begin{equation*}
\norm{W}_*
=\operatorname{tr}(P^\top U V^\top Q)
\le \norm{P^\top U}_F\,\norm{V^\top Q}_F
\le \frac{1}{2}\bigl(\norm{U}_F^2+\norm{V}_F^2\bigr),
\end{equation*}
where the first inequality is Cauchy--Schwarz for the Frobenius inner product and the second is the arithmetic--geometric mean inequality. Taking the infimum over all factorizations gives
\begin{equation*}
\inf_{W=UV^\top}\frac{1}{2}\bigl(\norm{U}_F^2+\norm{V}_F^2\bigr)=\norm{W}_*.
\end{equation*}

Applying this identity to the two candidate geometries now reduces the comparison to their singular values. By Theorem~\ref{thm:2D-vs-ETF-Schatten}, $\Wcyc$ has two singular values equal to $\sqrt{K/2}$, while $\Wetf$ has $K-1$ singular values equal to $\sqrt{K/(K-1)}$. Therefore
\begin{equation*}
\norm{\Wcyc}_* = 2\sqrt{K/2}=\sqrt{2K},
\qquad
\norm{\Wetf}_* = (K-1)\sqrt{\frac{K}{K-1}}=\sqrt{K(K-1)},
\end{equation*}
which proves the stated $\Theta(K)$ gap.
\end{proof}

\subsection{Proof of Theorem~\ref{thm:2D-vs-ETF-combined}}
\label{app:proof:combined}
\begin{proof}
Let
\begin{equation*}
\Delta_{\mathrm{CE}}
\coloneqq \Lce(\Wcyc)-\Lce(\Wetf).
\end{equation*}
If $\mathcal R(W)=\norm{W}_{S_q}^q$, define
\begin{equation*}
\Delta_q
\coloneqq \norm{\Wcyc}_{S_q}^q-\norm{\Wetf}_{S_q}^q.
\end{equation*}
Then
\begin{equation*}
\Ltot(\Wcyc)-\Ltot(\Wetf)
=\Delta_{\mathrm{CE}} + \lambda\Delta_q.
\end{equation*}
By Theorem~\ref{thm:2caseCE}, $\Delta_{\mathrm{CE}}=\log I_0(1/\tilde\tau)+o(1)=\Theta(1)$, while Theorem~\ref{thm:2D-vs-ETF-Schatten} gives $\Delta_q=-\Theta(K)$ for every fixed $q<2$. Therefore, the cyclic code is preferred exactly when
\begin{equation*}
\lambda > \frac{\Delta_{\mathrm{CE}}}{|\Delta_q|},
\end{equation*}
and the right-hand side scales as $\Theta(1/K)$.

For the factorized weight-decay proxy, Proposition~\ref{prop:factorized-weight-decay} shows that $\mathcal R(W)=\norm{W}_*$, i.e., the case $q=1$. The same comparison, therefore, yields the same asymptotic threshold. This is the claimed scaling of $\lambdacrit$.
\end{proof}

\end{document}